\algnewcommand{\Inputs}[1]{%
  \State \textbf{Inputs:}
  \Statex \hspace*{\algorithmicindent}\parbox[t]{.8\linewidth}{\raggedright #1}
}
\algnewcommand{\Initialize}[1]{%
  \State \textbf{Initialize:}
  \Statex \hspace*{\algorithmicindent}\parbox[t]{.8\linewidth}{\raggedright #1}
}
\newcommand{\calN}{{\cal N}}
\def \bv {{\bf v}}
\def \bv {{\bf v}}
\def \bbzeta {{\boldsymbol \zeta}}
 \long\def\symbolfootnote[#1]#2{\begingroup
 	\def\thefootnote{\fnsymbol{footnote}}
 	\footnote[#1]{#2}\endgroup} \psfull
\begin{document}

\title{\huge Kernel-Based Structural Equation Models for\\
Topology Identification of Directed Networks}

\author{{{\it Yanning Shen, \textit{Student Member}, \textit{IEEE}, Brian Baingana, \textit{Member}, \textit{IEEE},\\
 and Georgios~B.~Giannakis, \textit{Fellow, IEEE}}}\\
\IEEEauthorblockA{Dept. of ECE and DTC, University of Minnesota, Minneapolis, USA\\
Emails: \{shenx513,baing011,georgios\}@umn.edu}
}

\markboth{IEEE TRANSACTIONS ON SIGNAL PROCESSING \today \; (SUBMITTED)}{}
\maketitle  

\symbolfootnote[0]{$\dag$ Work in this paper was supported by grants NSF 1500713 and NIH 1R01GM104975-01. Part of this work has also been accepted for presentation at the Conference on Information Sciences and Systems, Princeton, New Jersey, March 2016.}

\symbolfootnote[0]{$\ast$ Y. Shen, B. Baingana, and G. B. Giannakis are with the Dept.
	of ECE and the Digital Technology Center, University of
	Minnesota, 200 Union Street SE, Minneapolis, MN 55455. Tel/fax:(612)626-7781/625-4583; Emails:
	\texttt{\{shenx513,baing011,georgios\}@umn.edu. }}

%\vspace*{-90pt}
%\begin{center}
%	\small{\bf Submitted: }\today
%\end{center}
%\vspace*{10pt}

% % % % % % % % % % % % % % % % % % % % % % % % % % % % % % % % % % % % % % % %
%                         Abstract                                            %
% % % % % % % % % % % % % % % % % % % % % % % % % % % % % % % % % % % % % % % %

%\thispagestyle{empty}\addtocounter{page}{-1}
\vspace{-8mm}
\begin{abstract}
Structural equation models (SEMs) have been widely adopted for inference of causal interactions in complex networks. Recent examples include unveiling topologies of hidden causal networks over which processes such as spreading diseases, or rumors propagate. The appeal of SEMs in these settings stems from their simplicity and tractability, since they typically assume linear dependencies among observable variables. Acknowledging the limitations inherent to adopting linear models, the present paper advocates nonlinear SEMs, which account for (possible) nonlinear dependencies among network nodes. The advocated approach leverages kernels as a powerful encompassing framework for nonlinear modeling, and an efficient estimator with affordable tradeoffs is put forth. Interestingly, pursuit of the novel kernel-based approach yields a convex regularized estimator that promotes edge sparsity, and is amenable to proximal-splitting optimization methods. To this end, solvers with complementary merits are developed by leveraging the alternating direction method of multipliers, and proximal gradient iterations. Experiments conducted on simulated data demonstrate that the novel approach outperforms linear SEMs with respect to edge detection errors. Furthermore, tests on a real gene expression dataset unveil interesting new edges that were not revealed by linear SEMs, which could shed more light on regulatory behavior of human genes.
\end{abstract}

\begin{IEEEkeywords}
Structural equation models, nonlinear modeling, network topology inference, kernel-based models.
\end{IEEEkeywords}

% % % % % % % % % % % % % % % % % % % % % % % % % % % % % % % % % % % % % % % %
%                         Section I                                           %
% % % % % % % % % % % % % % % % % % % % % % % % % % % % % % % % % % % % % % % %
\vspace{3mm}
\section{Introduction}
\label{sec:intro}
Inference of network topologies is a well-studied problem with practical applications in diverse settings; see e.g., \cite[Ch. 7]{kolaczyk2009statistical} and references therein. For example, discovery of causal links between regions of interest in the brain is tantamount to identification of an implicit connectivity network. Studies pertaining to regulatory and inhibitory interactions among genes depend upon identification of unknown links within gene regulatory networks. Suspected terrorists can be identified by learning hidden links in social interaction networks, or telephone call graphs connecting individuals to well-known terrorists. 

SEMs provide an overarching statistical modeling framework for inference of causal relationships within complex systems~\cite{kaplan09}. These directional effects are seldom revealed by standard linear models which impose symmetric associations between random variables, such as those represented by covariances or correlations; see e.g.,~\cite{friedman2008sparse}. Most contemporary approaches overwhelmingly focus on linear SEMs due to their inherent simplicity and tractability. For this reason, linear SEMs have been widely adopted in fields as diverse as sociometrics~\cite{goldberger1972structural}, psychometrics~\cite{muthen1984general}, and genetics~\cite{cai2013inference}. More recently, linear SEMs have been advocated for tracking dynamic topologies of  directed social networks from temporal cascade traces observed over the nodes~\cite{BGG14,baingana2015switched}. 

Recognizing the limitations of linear SEMs for modeling nonlinear phenomena, several variants of nonlinear SEMs have recently emerged in several works; see e.g.,~\cite{joreskog1996nonlinear,wall2000estimation,jiang2010bayesian,lee2003model,harring2012comparison,kelava2014nonlinear}. For example, in~\cite{lee2003model} and~\cite{lee2003maximum}, nonlinearities are only modeled with respect to independent system variables, while a hierarchical Bayesian framework is reported in~\cite{jiang2010bayesian}. Polynomial SEMs, the closest extension to classical linear SEMs, are the focus of several other works; see e.g.,~\cite{joreskog1996nonlinear,wall2000estimation,harring2012comparison,kelava2014nonlinear}. In all these contemporary approaches, it is assumed that the network connectivity structure is known a priori, and developed algorithms only estimate the unknown edge weights. However, this is a rather significant limitation since such prior information may not be available in practice, especially when dealing with massive networks. 

The present paper builds upon these prior works, and advocates an additive nonlinear model to capture dependencies between observed nodal measurements, without explicit knowledge of the edge structure. A key feature of the novel approach is the premise that edges in the hidden network are sparse. Edge sparsity is an attribute that naturally emerges in most real-world networks; for instance, genes are known to regulate or inhibit only a small subset of all possible genes within an organism. Members of online social networks (e.g., \emph{Facebook}, or \emph{Twitter}) typically connect with a few hundreds of friends, compared to millions of all other subscribers. This sparse edge connectivity has recently motivated development of regularized estimators, promoting the inference of sparse network adjacency matrices~\cite{BGG14,angelosante2011sparse,kopsinis2011online}. Indeed, it has been shown in these works that exploiting edge sparsity markedly improves the estimation accuracy of topology inference algorithms. 

The rest of the paper is organized as follows. Section~\ref{sec:nmps} postulates an additive nonlinear model, with functional forms of the nonlinear summands considered unknown. Toward promoting edge sparsity, Section~\ref{sec:kernels} adopts a sparsity-promoting regularized least-squares estimator, where it turns out that nonlinear data dependencies are all captured through inner products, eliminating the need to explicitly know their functional forms. To this end, a regularized kernel-based approach is advocated, leading to a convex optimization problem that can be solved with global optimality guarantees. 

Exploiting the problem structure, solvers that leverage proximal-splitting optimization approaches with complementary merits are developed in Section~\ref{sec:algs}. Specifically, an alternating direction method of multipliers (ADMM) solver that is amenable to decentralized implementation, and a first-order proximal gradient (PG) algorithm with reduced computational complexity are put forth. Building on prior work in~\cite{shen2016nonlinear}, Section~\ref{sec:polysem} specializes the postulated nonlinear SEM to the polynomial form, with nonlinear summands postulated to belong to the class of higher-order polynomials. Extensive numerical tests on both simulated and real gene expression data are conducted in Section~\ref{sec:test}, and the novel approach is shown to outperform linear variants with respect to the number of correctly identified edges. Finally, Section~\ref{sec:conclusion} concludes the paper and highlights several potential future research directions that this work opens up.

\noindent\textit{Notation}. Bold uppercase (lowercase) letters will denote matrices (column vectors), while operators $(\cdot)^{\top}$, $\lambda_{\max}(\cdot)$, and $\textrm{diag}(\cdot)$ will stand for matrix transposition, maximum eigenvalue, and diagonal matrix, respectively. The identity matrix will be represented by $\mathbf{I}$, while $\mathbf{0}$ will denote the matrix of all zeros, and their dimensions will be clear in context. Finally, the $\ell_p$ and Frobenius norms will be denoted by $\|\cdot\|_p$, and $\|\cdot\|_F$, respectively. 

%%%%%%%%%%%%%%%%%%%%%%%%%%%%%%%%%%%
% Section 2
%%%%%%%%%%%%%%%%%%%%%%%%%%%%%%%%%%%
\vspace{-0.2cm}
\section{Preliminaries}
\label{sec:pre}
\begin{figure*}[tpb!]
	\centering
	\includegraphics[width=14cm]{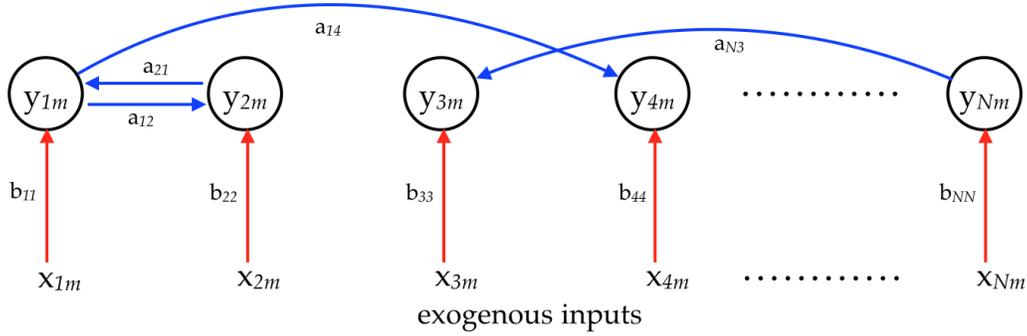}
	%	\caption{Comparison of Runtime $(\pi=0.1)$}
	\caption{Illustration of an $N$-node network with directed edges (in blue), and the $m$-th sample of endogenous measurements per node. SEMs explicitly account for exogenous inputs (red arrows), upon which endogenous variables may depend, in addition to the underlying network topology.}\label{fig:graph}
\end{figure*}
\vspace{-1mm}
Consider a directed $N$-node network $\mathcal{G}(\mathcal{V},\mathcal{E})$, whose topology is captured by an unknown adjacency matrix $\bbA\in\mathbb{R}^{N\times N}$. Let $a_{ij}\in\{0, 1\}$ denote entry $(i,j)$ of $\bbA$, which is nonzero only if a directed edge is present from node $i$ to $j$; see Figure~\ref{fig:graph}. Suppose the network represents an abstraction of a complex system with measurable inputs and an observable output process that propagates over the network following directed links. Let $x_{im}$ denote the $m$-th input to node $i$, and $y_{im}$ the $m$-th observation of the propagating process measured at node $i$. In the context of brain networks, $y_{im}$ could represent the $m$-th sample of an electroencephalogram (EEG), or functional magnetic resonance imaging (fMRI) measurement at region $i$, while $x_{im}$ could be a controlled stimulus that affects a specific region of the brain. In social networks (e.g., Twitter) over which information diffuses, $y_{im}$ could represent the timestamp when subscriber $i$ tweeted about a viral story $m$, while $x_{im}$ could measure their level of interest in such stories.

SEMs postulate that $y_{im}$ depends on two classes of variables: i) the measurements of the diffusing process $\{ y_{jm} \}_{j \neq i}$  (a.k.a. \emph{endogenous} variables); and ii) external inputs $x_{im} $ (a.k.a. \emph{exogenous} variables). Most contemporary SEM approaches assume that $y_{im}$ depends linearly on both $\{ y_{jm} \}_{j \neq i}$ and $ x_{im} $, that is,
 \begin{align}
 	y_{jm}= \underbrace{\sum_{i\neq j}a_{ij}y_{im}}_{\text{endogenous term}} + \underbrace{b_{jj} x_{jm} }_{\text{exogenous term}}+e_{jm}
	\label{eq:sem:linear1}
 \end{align}
where $e_{jm}$ captures unmodeled dynamics. Note that~\eqref{eq:sem:linear1} encodes the causal dependence structure between network nodes through the unknown coefficients $\{ a_{ij}\}$. Specifically, $y_{jm}$ only depends on the single-hop neighborhood of node $j$, namely $\mathcal{N}_j := \{ i : a_{ij} \neq 0 \}$. In the sequel, this paper will build upon the classical linear SEM~\eqref{eq:sem:linear1}, and put forth a general nonlinear SEM. Before proceeding, a couple of pertinent remarks are due.

%\noindent\textbf{Remark 1(Structural versus measurement models):}
\begin{remark}[Structural versus measurement models] 
Strictly speaking,~\eqref{eq:sem:linear1} represents the so-termed structural model in general SEM parlance, where the endogenous and exogenous variables may not be directly observable (a.k.a. structural/latent variables); see e.g., \cite[Ch. 2]{kaplan09} for details. In this general setup, the complete SEM specification also endows the model with noisy measurement equations, through which the structural variables are observed. Nevertheless, the structural variables are directly observable in a number of practical settings, and the model~\eqref{eq:sem:linear1} often suffices. The rest of this paper deals with this latter setting, and only focuses on the dependence model between endogenous and exogenous variables.
\end{remark}

%\begin{remark}[Relation to local linear embedding (LLE)] 
%In the absence of exogenous inputs,~\eqref{eq:sem:linear1} bears similarity to LLE, a popular manifold learning approach. Given high-dimensional vectors $\{ \mathbf{y}_i \in \mathbb{R}^D \}_{i=1}^N$ that lie on a $d$-dimensional manifold $(d \ll D)$, LLE finds the corresponding vectors $\{ \mathbf{y}_i' \in \mathbb{R}^d \}_{i=1}^N$ so that the local neighborhood structure is preserved \cite{roweis2000nonlinear}. This is accomplished by determining the weights $\{ w_{ij} \}_{ j \in \mathcal{N}_i}$, with $\mathcal{N}_i$ denoting the neighborhood of $\mathbf{y}_i$, by solving the least-squares fit $ \min_{\{w_{ij}\}} \| \mathbf{y}_i - \sum_{j \in \mathcal{N}_i} w_{ij} \mathbf{y}_j  \|_2^2$ for all $i$. This is followed by determining $\{ \mathbf{y}_i' \in \mathbb{R}^d \}_{i=1}^N$ that preserve the weights $\{ w_{ij} \}$ by solving $ \min_{\mathbf{y}_1', \dots, \mathbf{y}_N'} \sum_{i=1}^N \| \mathbf{y}_i'  - \sum_{j=1}^N w_{ij} \mathbf{y}_j' \|_2^2$. Clearly the weight determination step is similar to estimation of the coefficients $\{ a_{ij}\}$ in~\eqref{eq:sem:linear1}, since both LLE and the linear SEM assume linear dependence within local neighborhoods over a nonlinear manifold and a directed network, respectively. Based upon this perspective, the nonlinear modeling framework advocated by the present paper can be readily applied to capture nonlinear local dependencies in manifold learning settings.
%\end{remark}

%\noindent\textbf{Remark 2(Link with sparse subspace clustering (SSC)):}
\begin{remark}[Link with sparse subspace clustering (SSC)] 
In the absence of exogenous inputs,~\eqref{eq:sem:linear1} bears remarkable similarity to SSC~\cite{elhamifar2009sparse}, whose goal is to cluster high-dimensional data points lying within a union of low-dimensional subspaces. Given vectors $\{ \mathbf{y}_i \in \mathbb{R}^D \}_{i=1}^N$ sampled from the union of $d$-dimensional subspaces embedded in $\mathbb{R}^{D}$, with $d \ll D$, SSC postulates that each datum is expressible as a linear combination of points drawn from its subspace, that is, $\bby_i = \sum_j w_{ij} \bby_j + \boldsymbol{\epsilon}_i$, where $w_{ij} \neq 0$ only if $i$ and $j$ belong to the same subspace, while $\boldsymbol{\epsilon}_i$ captures noise and unmodeled dynamics. SSC prescribes a sparsity-promoting least-squares estimator for the weights $\{ w_{ij} \}$ by solving: $\min_{\{w_{ij},~ j\neq i\}} \| \mathbf{y}_i - \sum_{j } w_{ij} \mathbf{y}_j  \|_2^2+\lambda \sum_{j}|w_{ij}|$, which ensures that only a few coefficients are nonzero per $i$. This is followed by spectral clustering of the similarity graph induced by the estimated pairwise weights, with subsequent identification of the constituent subspaces e.g., via principal component analysis. Clearly, estimation of the weights in SSC is reminiscent of recovery of the unknown coefficients $\{ a_{ij}\}$ in~\eqref{eq:sem:linear1}. Viewing SSC as an approximate linear approach to manifold learning [compare also with local linear embedding in e.g.,~\cite{roweis2000nonlinear}], the advocated nonlinear SEM in the present paper can be readily generalized to tasks where data are actually sampled from low-dimensional nonlinear manifolds, embedded in high-dimensional Euclidean spaces.
\end{remark}

%%%%%%%%%%%%%%%%%%%%%%%%%%%
% Section three
%%%%%%%%%%%%%%%%%%%%%%%%%%%

\section{Nonlinear Model and Problem Statement}
\label{sec:nmps}

Going beyond the linearity assumption, this section generalizes~\eqref{eq:sem:linear1} to capture nonlinear causal dependencies. Consistent with traditional linear SEMs, it is postulated that each nodal measurement results from a combination of endogenous and exogenous terms. Specifically, let
 \begin{align}
 	y_{jm} = \psi_{\text{en}} (\bby_{-jm}) + \psi_{\text{ex}} (x_{jm})  + e_{jm}
	\label{eq:sem:nonlinear0}
 \end{align}
where $\psi_{\text{en}}(.) $ and $\psi_{\text{ex}}(.) $ are memoryless nonlinear functions of their arguments, and $\bby_{-jm} := [ y_{1m}, \dots, y_{(j-1)m},$ $y_{(j+1)m}, \dots, y_{Nm}  ]^{\top}$ collects all endogenous variables per sample $m$, excluding the measurements at node $i$. For simplicity of exposition, it will be explicitly assumed here that $\psi_{\text{ex}} (x_{jm}) = b_{jj} x_{jm}$, although the advocated approach readily incorporates settings where $\psi_{\text{ex}} (.)$ admits a more general nonlinear form. Similar to the so-termed \emph{generalized additive models} (GAMs), the present paper postulates that
 \begin{align}
 	\psi_{\text{en}} (\bby_{-jm})  = \sum_{i \neq j} a_{ij} \psi_{\text{en}, ij} (y_{im})
	\label{eq:sem:nonlinear1}
 \end{align}
where each $ \psi_{\text{en}, ij} (y_{im})$ is a nonlinear function of a single scalar variable. GAMs are popular for nonlinear modeling in diverse machine learning tasks including regression and classification, since they lead to computationally efficient solvers, while their additive nature facilitates assessment of feature importance in prediction tasks. Since the hidden network topology is encoded through the unknown coefficients in~\eqref{eq:sem:nonlinear0}, it is desirable to adopt a nonlinear modeling framework in which the unknowns admit a similar interpretation. The key difference between~\eqref{eq:sem:nonlinear1} and traditional GAMs is the introduction of the explicit unknown coefficients $\{ a_{ij} \}$ to facilitate identification of the network structure.

Furthermore, suppose that $ \psi_{\text{en}, ij} (y_{im})$ can be written as
 \begin{align}
  \psi_{\text{en}, ij} (y_{im}) = \sum_{p = 1}^P   c_{ijp} \phi_{p}(y_{im}) 
	\label{eq:sem:nonlinear2}
 \end{align}
where $\{c_{ijp}\}_{p=1}^P$ are unknown coefficients but $\{ \phi_{p}(.)  \}_{p=1}^P$ are (possibly) known functions, and $P$ can take on any positive integer or even infinity. Collecting the $P$ coefficients into $\bbc_{ij}:=[c_{ij1},  \ldots, c_{ijP}]^\top$, and defining $\bbphi (y_{im}):=[\phi_{1}(y_{im}), \ldots, \phi_{P}(y_{im})]^{\top}$, then one obtains the following nonlinear SEM [cf. \eqref{eq:sem:nonlinear1} and \eqref{eq:sem:nonlinear2}]
 \begin{align}
 	y_{jm}=\sum_{i\neq j}a_{ij}  \bbphi^\top(y_{im})\bbc_{ij}+b_{jj}x_{jm}+e_{jm}
	\label{eq:sem:nonlinear}
 \end{align}
for $j = 1, \dots, N$, and $m = 1, \dots, M$. With $\bby_j := [y_{j1}, \ldots, y_{jM}]^\top \in\mathbb{R}^M$, $\bbx_j := [x_{j1},\ldots,x_{jM}]^\top$, and $\bbe_j:= [e_{j1},\ldots,e_{jM}]^\top$,~\eqref{eq:sem:nonlinear} can be written in vector form as

 \begin{align}
 	{\bby}_j=\sum_{i\neq j}\bbPhi_{i}\bbc_{ij}a_{ij}+b_{jj}\bbx_j+\bbe_j
 \end{align} 
where $\bbPhi_{i}  := [\bbphi(y_{i1}),\ldots,\bbphi(y_{iM})]^\top$. Furthermore, letting $\bbY:=[\bby_1, \dots,\bby_N]$, $\bbX:=[\bbx_1,\dots,\bbx_N]$ and $\bbPhi:=[\bbPhi_1, \dots,\bbPhi_N]$, one obtains the following nonlinear matrix SEM
\begin{align}
	\bbY=\bbPhi\bbW+\bbX\bbB+\bbE
	\label{mod:nonlinear}
\end{align}
where $\mathbf{B}$ is a diagonal matrix with $(i, i)$th entry $\left[ \mathbf{B} \right]_{ii} = b_{ii}$, and the $NP \times N$ block matrix $\bbW$ is 
\begin{align}
\bbW:=\left[
\begin{array}{cll}
a_{11}\bbc_{11} &	\cdots &a_{1N}\bbc_{1N}\\
\vdots & \ddots & \vdots\\
a_{N1}\bbc_{N1} &\cdots	& a_{NN}\bbc_{NN}\end{array} \right]\label{def:W}
\end{align}
has structure modulated by the entries of $\mathbf{A}$. For instance, if $a_{ij}=0$, then $a_{ij} \mathbf{c}_{ij}$ is an all-zero block regardless of the values of entries in $\mathbf{c}_{ij}$. 

As expected in real-world networks that exhibit edge sparsity, $\bbA$ has only a few nonzero entries, and this prior information will be useful to develop efficient estimators. The dependence of \eqref{mod:nonlinear} on $\bbA$ is indirect, through $\bbW$. Interestingly, the sparsity inherent to $\bbA$ is not lost in this transformation, and it manifests itself as \emph{group sparsity} in the block-matrix $\bbW$. Specifically, entries $\{ a_{ij} \}$ determine whether certain blocks are all-zero or not [cf. \eqref{def:W}], naturally leading $\bbW$ to exhibit group sparsity. The present paper will leverage this inherent group sparsity, and put forth a novel kernel-based estimator for inference of the unknown network topology, in the next section. Having introduced the proposed nonlinear SEM, the problem statement can now be formally stated as follows.

\noindent\textbf{Problem statement.} Given nodal measurements $\bbX$ and $\bbY$, we wish to estimate the edge-modulated matrix $\bbW$, and correspondingly the unknown adjacency matrix $\mathbf{A}$, as well as the diagonal exogenous coefficient matrix $\bbB$. Whether $\bbPhi$ is (un)known will be clarified in the ensuing section which puts forth a novel estimator.
%
%
%%%%%%%%%%%%%%

\section{Kernel-based topology estimation}
\label{sec:kernels}
\vspace{-3mm}
Towards estimating the unknown variables in~\eqref{mod:nonlinear}, with no knowledge of the additive noise statistics, this section advocates minimization of a regularized least-squares (LS) cost, namely
\vspace{0.2cm}
\begin{equation}
\label{eq:obj:matrix}
	\{\hat{\bbW},\hat{\bbB}\} = \underset{\bbW \in \mathcal{W}, \; \bbB \in \mathcal{B}}{\arg \min}   (1/2) \|\bbY-\bbPhi\bbW-\bbX\bbB\|_F^2 + \lambda \|\bbW\|_{\mathcal{G},1}
\end{equation} 
where $\mathcal{B} := \{ \bbB \in \mathbb{R}^{N \times N}: b_{ij} = 0, \; \forall i \neq j  \}$ is a set of diagonal square matrices, and $\mathcal{W} := \{ \bbW \in \mathbb{R}^{NP \times N} :  \bbw_{ii} = \mathbf{0}, \;\; i = 1, \dots, N \}$, with $\bbw_{ij}:= a_{ij}\bbc_{ij}$ denoting the $(i,j)-$th block entry of $\bbW$. The constraint set $\mathcal{W}$ restricts solutions to adjacency matrices representing networks without self-loops, i.e., $a_{ii} = 0 \Longleftrightarrow \bbw_{ii} = \mathbf{0}$.
On the other hand, restricting $\bbB$ to the set of diagonal matrices $\mathcal{B}$ is consistent with the prior modeling assumption that each node in the network is associated with a single exogenous input. The penalty term
\vspace{0.2cm}
\begin{align}
\label{eq:gpenalty}
	 \|\bbW\|_{\mathcal{G},1}:=\sum_{i,j}\|\bbw_{ij}\|_2
\end{align}
is a well-known regularizer that has been shown to promote group sparsity \cite{yuan2006model}, while the regularization parameter $\lambda \geq 0$ allows the estimator to trade off group sparsity for the LS fit. 

Upon solving the optimization problem~\eqref{eq:obj:matrix}, the resulting estimate $\hat{\bbW}$ contains all the information necessary to recover the network topology captured by $\mathbf{A}$, by simply identifying nonzero blocks. Although the penalty~\eqref{eq:gpenalty} is non-differentiable, it is worth stressing that the regularized cost in~\eqref{eq:obj:matrix} is jointly convex with respect to (w.r.t.) $\bbW$ and $\bbB$. In principle, the problem can be solved with guaranteed global optimality. However, this presupposes that $\bbPhi$ is readily available, which may not be possible since one may not know the functions $\{ \phi_p(.) \}_{p=1}^P$ explicitly. Moreover, even when $\{ \phi_p(.) \}_{p=1}^P$ are known, no efficient strategy may be available to solve~\eqref{eq:obj:matrix}, when $P\rightarrow \infty$. In lieu of the aforementioned challenges, this paper leverages kernel-based approaches that have well-appreciated merits in nonlinear modeling, often circumventing the need for explicit knowledge of the nonlinear mappings. Derivation of the novel kernel-based estimator will rely on the following result, which can be viewed as a variant of the known Representer's Theorem\cite{wahba1990spline} in our nonlinear SEM setting.

\vspace{1mm}
\begin{proposition}\label{proposition1}
Suppose $\hat{\bbW}$ is the optimal solution of the regularized LS estimator in~\eqref{eq:obj:matrix}, with $\hat{\bbw}_{ij}$ denoting its $(i,j)$-th block entry, then $\hat{\bbw}_{ij}$ can be written as
\vspace{2mm}
\begin{align}
\label{eq:sol:w}
	\hat{\bbw}_{ij}=\sum_{m=1}^M \alpha_{ijm} \bbphi(y_{im})=\bbPhi_i^\top \bbalpha_{ij}
\end{align}
where $\bbalpha_{ij}:=[\alpha_{ij1}, \ldots,\alpha_{ijM}]^\top \in \mathbb{R}^M$ is a coefficient vector.
\end{proposition}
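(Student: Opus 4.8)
The plan is to prove Proposition~\ref{proposition1} by the orthogonality/projection argument that underlies the Representer Theorem, adapted to the block structure of $\bbW$ and to the group-norm regularizer.

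First I would fix an optimal pair $\{\hat{\bbW},\hat{\bbB}\}$ of~\eqref{eq:obj:matrix} and examine a single block $\hat{\bbw}_{ij}\in\mathbb{R}^P$. Let $\mathcal{S}_i:=\mathrm{span}\{\bbphi(y_{i1}),\dots,\bbphi(y_{iM})\}$, which is exactly the row space of $\bbPhi_i$, and write the orthogonal decomposition $\hat{\bbw}_{ij}=\bbw^{\parallel}_{ij}+\bbw^{\perp}_{ij}$ with $\bbw^{\parallel}_{ij}$ the projection of $\hat{\bbw}_{ij}$ onto $\mathcal{S}_i$ and $\bbw^{\perp}_{ij}\perp\mathcal{S}_i$. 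Two facts drive the argument. (i) The least-squares term depends on $\bbw_{ij}$ only through the vector $\bbPhi_i\bbw_{ij}$, whose $m$-th entry is $\bbphi^{\top}(y_{im})\bbw_{ij}$; since $\bbphi^{\top}(y_{im})\bbw^{\perp}_{ij}=0$ for every $m$, we have $\bbPhi_i\bbw_{ij}=\bbPhi_i\bbw^{\parallel}_{ij}$, so replacing $\hat{\bbw}_{ij}$ by its projection, simultaneously across all blocks, leaves $\bbPhi\bbW$ — and hence $(1/2)\|\bbY-\bbPhi\bbW-\bbX\bbB\|_F^2$ — completely unchanged. (ii) By the Pythagorean identity $\|\hat{\bbw}_{ij}\|_2^2=\|\bbw^{\parallel}_{ij}\|_2^2+\|\bbw^{\perp}_{ij}\|_2^2$, so $\|\bbw^{\parallel}_{ij}\|_2\le\|\hat{\bbw}_{ij}\|_2$, with equality iff $\bbw^{\perp}_{ij}=\mathbf{0}$. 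Because $\|\bbW\|_{\mathcal{G},1}=\sum_{i,j}\|\bbw_{ij}\|_2$ is additive over blocks [cf.~\eqref{eq:gpenalty}], zeroing every $\bbw^{\perp}_{ij}$ weakly decreases the penalty, hence the whole objective, while keeping $\hat{\bbB}$ untouched and preserving feasibility (note $\bbw_{ii}=\mathbf{0}$ already lies in $\mathcal{S}_i$, so $\mathcal{W}$ is respected).

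Combining (i) and (ii): for $\lambda>0$, if some $\bbw^{\perp}_{ij}\neq\mathbf{0}$ then the projected iterate has strictly smaller penalty and identical data fit, contradicting optimality of $\hat{\bbW}$; hence $\bbw^{\perp}_{ij}=\mathbf{0}$ for all $i,j$ and $\hat{\bbw}_{ij}\in\mathcal{S}_i$. Membership in $\mathcal{S}_i$ means $\hat{\bbw}_{ij}=\sum_{m=1}^M\alpha_{ijm}\bbphi(y_{im})$ for some scalars $\alpha_{ijm}$; collecting these into $\bbalpha_{ij}:=[\alpha_{ij1},\dots,\alpha_{ijM}]^{\top}$ and recalling that the $m$-th column of $\bbPhi_i^{\top}$ equals $\bbphi(y_{im})$ yields $\hat{\bbw}_{ij}=\bbPhi_i^{\top}\bbalpha_{ij}$, i.e.~\eqref{eq:sol:w}. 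The argument is otherwise routine; the one point needing care is the degenerate case $\lambda=0$, where the penalty no longer strictly penalizes the component orthogonal to $\mathcal{S}_i$, and~\eqref{eq:sol:w} can then be guaranteed only for \emph{some} minimizer rather than for every one. It is worth noting that this structural property is exactly what later allows all dependence on $\bbPhi$ to be recast through inner products $\bbphi^{\top}(y_{im})\bbphi(y_{im'})$, i.e.\ kernel evaluations, so that the maps $\{\phi_p(\cdot)\}$ never need to be known explicitly.
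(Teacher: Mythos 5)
Your proof is correct and follows essentially the same orthogonality/projection argument as the paper's Appendix~A, which expands $\hat{\bbw}_{ij}$ in a basis of $\mathrm{span}\{\bbphi(y_{im})\}_{m=1}^M$ plus an orthogonal complement and shows the complement coefficients must vanish at the optimum. Your version is in fact slightly tidier: you project all blocks simultaneously rather than via a nested inner/outer minimization, you state the Pythagorean identity correctly (the paper's intermediate display writes the norm as a sum of norms rather than the root of a sum of squares, though its conclusion still holds), and you rightly flag the degenerate case $\lambda=0$, where the representation is only guaranteed for \emph{some} minimizer.
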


\begin{proof}
See Appendix ~\ref{appendix:A} for the proof of Proposition~\ref{proposition1}.
\end{proof}

\noindent Acknowledging Proposition~\ref{proposition1}, and substituting~\eqref{eq:sol:w} into~\eqref{eq:obj:matrix}, the LS term can be written as
\begin{multline}
\label{eq:prob:ls_part}
(1/2) \|\bbY-\bbPhi\bbW-\bbX\bbB\|_F^2  \\
	= \sum_{j=1}^N(1/2) \bigg \| \bby_j-\sum_{i\neq j}\bbPhi_i\bbPhi_i^\top \bbalpha_{ij}-b_{jj}\bbx_j \bigg \|_2^2 
\end{multline}
while the regularization term is expressible as
\begin{align}
\label{eq:prob:reg_part}
\lambda \|\bbW\|_{\mathcal{G},1} = \lambda \sum_{j=1}^N\sum_{i\neq j}\sqrt{ \bbalpha_{ij}^\top\bbPhi_i\bbPhi_i^\top \bbalpha_{ij}}.
\end{align}
Clearly, each entry of $\bbPhi_i\bbPhi_i^\top$ constitutes an inner product in the ``lifted'' space, that is, $[\bbPhi_i\bbPhi_i^\top]_{k,l}=\bbphi^\top(y_{ik})\bbphi(y_{il})$. Defining the set of kernel matrices $\{\bbK_i\in \mathbb{R}^{M\times M}\}_{i=1}^N$, with $\bbK_i := \bbPhi_i\bbPhi_i^\top$, it is possible to recast the estimator~\eqref{eq:obj:matrix}, so that all dependencies on the functions $\{ \phi(.) \}$ are captured through entries of $\bbK_i $, for $i=1, \dots, N$. Accordingly, one obtains the following estimator
\begin{multline}
\label{eq:prob:mod}
\{ \{\hat{\bbalpha}_{ij} \}, \hat{\bbB} \} = \underset{\{\bbalpha_{ij}\}, \bbB }{\text{arg min}}  \;\; 
	(1/2) \|\bbY-\tilde{\bbK} \bbW_{\alpha}-\bbX\bbB\|_F^2 \\
+ \lambda \sum_{j=1}^N\sum_{i\neq j}\sqrt{ \bbalpha_{ij}^\top\bbK_i\bbalpha_{ij}}
\end{multline} 
where $\tilde{\bbK}:=[\bbK_1,\ldots,\bbK_N]$, $\bbW_{\alpha}:=[\bbalpha_{1},\ldots,\bbalpha_{N}]$, and $\bbalpha_{j}:=[\bbalpha_{1j}^\top,\ldots,\bbalpha_{(j-1) j}^\top,\boldsymbol{0}_{M\times 1}^{\top},\bbalpha_{j+1}^\top,\ldots\bbalpha_N^\top]^\top$. Examination of~\eqref{eq:prob:mod} reveals that $\bbW_{\alpha}$ inherits the block-sparse structure of $\bbW$, that is, if $\bbw_{ij}=\boldsymbol{0}$, then correspondingly $\bbalpha_{ij}=\boldsymbol{0}$. Recognizing that summands in the regularization term of~\eqref{eq:prob:mod} can be written as $\sqrt{ \bbalpha_{ij}^\top\bbK_i\bbalpha_{ij}} = \| \bbK_i^{1/2} \bbalpha_{ij} \|_2$, it is clear that \eqref{eq:prob:mod} is a convex problem admitting a globally-optimal solution. Exploiting the structure inherent to~\eqref{eq:prob:mod}, the next section develops inference algorithms for unveiling the unknown network topology.
\vspace{1mm}
\section{Topology Inference Algorithms}
\label{sec:algs}
Given matrices $\bbY$, $\bbX$, and $\tilde{\bbK}$, this section capitalizes on convexity, as well as the nature of the additive terms in~\eqref{eq:prob:mod} to develop efficient topology inference algorithms. Proximal-splitting approaches have been shown useful for convex optimization when the cost function comprises both smooth and nonsmooth components \cite{daubechies2004iterative}. Prominent among these are the alternating direction method of multipliers (ADMM), as well as proximal gradient (PG) descent approaches. Our first advocated approach leverages ADMM iterations as demonstrated next, see e.g., \cite{schizas3} for an early application of ADMM to distributed estimation.

\vspace{-1mm}
\subsection{Alternating Direction Method of Multipliers}
For ease of exposition, let the equality constraints ($\bbalpha_{jj}=\boldsymbol{0}$, $b_{ij}=0$, $\forall i\neq j$) temporarily remain implicit. Introducing the change of variables $\bbgamma_{ij} = \bbK_{ij}^{1/2}\bbalpha_{ij} $, problem~\eqref{eq:prob:mod} can be equivalently recast as
\begin{eqnarray}
\label{eq:prob:admm}
\nonumber
\{\{\hat{\bbalpha}_{ij}\}_{i\neq j}, \hat{\bbB}\}   =& \quad \\
\nonumber
\underset{\{\bbalpha_{ij}\}, \bbB }{\text{arg min}}  & 
	(1/2) \|\bbY-\tilde{\bbK} \bbW_{\alpha}-\bbX\bbB\|_F^2+g(\bbGamma) \\
 \text{s.t.}  &  \bbgamma_{ij}-\bbK_{ij}^{1/2}\bbalpha_{ij}=\boldsymbol{0} \;\;\forall i, \; j
\end{eqnarray}
where the matrix $ \bbGamma:=[\bbgamma_{1}, \ldots, \bbgamma_{N}]$, with $\bbgamma_{j}:=[\bbgamma_{1j}^\top, \ldots, \bbgamma_{(j-1) j}^\top,\boldsymbol{0}_{M\times 1}^{\top}, \bbgamma_{j+1}^\top, \ldots, \bbgamma_{N}^\top]^\top$, and
\begin{align}
 g(\bbGamma):=\lambda \sum_{i=1}^N\sum_{j=1}^N\|\bbgamma_{ij}\|_2.
 \end{align} 
Defining the block-diagonal matrix $\bbD{:=}\text{Bdiag}(\bbK_1, \dots, \bbK_N)$, where the operator $\text{Bdiag(.)}$ constructs a block-diagonal matrix from its matrix arguments, one can write the augmented Lagrangian of~\eqref{eq:prob:admm} as
\begin{align}
\label{eq:prob:aug_lagrangian}
	\mathcal{L}_{\rho}(\bbW_{\alpha}, \bbB, \bbGamma, \boldsymbol{\Xi}) 
	&=
	(1/2)\|\bbY-\tilde{\bbK} \bbW_{\alpha}-\bbX\bbB\|_F^2 + g(\bbGamma) \nonumber\\
&\hspace{-2cm}+ \langle \boldsymbol{\Xi} , \bbD^{1/2} \bbW_{\alpha}-\bbGamma \rangle + (\rho/2) \|\bbGamma - \bbD^{1/2}\bbW_{\alpha}\|_F^2.
\end{align}
In~\eqref{eq:prob:aug_lagrangian}, $\boldsymbol{\Xi}$ is a dual matrix-variable that collects all Lagrange multipliers corresponding to the equality constraints introduced in~\eqref{eq:prob:admm}, $\langle \mathbf{P}, \mathbf{Q} \rangle$ denotes the inner product between matrices $\mathbf{P}$ and $\mathbf{Q}$, while $\rho > 0$ is the a priori prescribed penalty parameter. ADMM essentially adopts alternating minimization (AM) iterations to minimize $\mathcal{L}_{\rho}(\bbW_{\alpha}, \bbB, \bbGamma, \boldsymbol{\Xi})$ over the primal variables $\bbW_{\alpha}, \bbB$, and $\bbGamma$, followed by a gradient ascent step over the dual variables in $\boldsymbol{\Xi}$\cite{baingana2013dynamic,schizas3}. During iteration $k+1$, AM updates of the primal and dual variables entail the following steps
\begin{subequations}
	\begin{align}
	\label{admm:1}
     \bbW_{\alpha}[k+1] & =  \underset{ \bbW_{\alpha} }{\arg \min}  \;\; \mathcal{L}_{\rho}(\bbW_{\alpha}, \bbB[k], \bbGamma[k], \boldsymbol{\Xi}[k])\\
	\label{admm:2}
	\bbB[k+1] & =  \underset{\bbB}{\arg\min} \;\; \mathcal{L}_{\rho}(\bbW_{\alpha}[k+1], \bbB, \bbGamma[k], \boldsymbol{\Xi}[k])\\
	\label{admm:3}
	\bbGamma[k+1] & =  \underset{\bbGamma}{\arg \min} \;\; \mathcal{L}_{\rho}(\bbW_{\alpha}[k+1], \bbB[k+1], \bbGamma, \boldsymbol{\Xi}[k])\\
	\label{admm:4}
	\boldsymbol{\Xi}[k+1] & =  \boldsymbol{\Xi}[k]+\rho (\bbD^{1/2}\bbW_{\alpha}[k+1] - \bbGamma[k+1]).
	\end{align}
\end{subequations}
Per step, the augmented Lagrangian is minimized w.r.t. a specific variable, with all the rest fixed to their most recent update, until convergence is attained. Each subproblem under the ADMM updates is studied next, and their solutions are correspondingly derived.  

Focusing on $\bbW_{\alpha}[k+1] $, note that \eqref{admm:1} can be written in terms of $\{ \bbalpha_j \}_{j=1}^N$ as 
\vspace{-0.2cm}
\begin{multline}
\label{admm:obj:F}
	\bbW_{\alpha}[k+1]  \; = \;\underset{\bbalpha_1, \ldots, \bbalpha_N }{\arg \min}  \;\;\;
	\sum\limits_{j=1}^N  \bigg[ 
	(1/2) \bbalpha_j^{\top} \left( \tilde{\bbK}^{\top} \tilde{\bbK} + \rho\bbD \right)
	\bbalpha_j \\ 
	\hspace{-0cm}-\bbalpha_j^{\top} \big( \rho \bbD^{1/2}\bbgamma_j[k]+\tilde{\bbK}^\top\bby_j 	 
-\bbD^{1/2}\bbxi_{j}[k]-b_{jj}[k]\tilde{\bbK}^\top\bbx_j \big)  \bigg]
\end{multline}
where $\boldsymbol{\xi}_j$ denotes column $j$ of $\bbXi$.
Clearly the cost in \eqref{admm:obj:F} decouples across columns of $\bbW_{\alpha}$, and admits closed-form, parallelizable solutions. Incorporating the structural equality constraint $\bbalpha_{jj}=\boldsymbol{0}$, one obtains the following decoupled subproblem per column $j$
\begin{align}
\label{admm:obj:alpha}
	\tilde{\bbalpha}_j[k+1] = \arg \min_{\tilde{\bbalpha_j}} \;\;\; (1/2) \tilde{\bbalpha}_j^\top\left(\tilde{\bbK}_j^\top\tilde{\bbK}_j+\rho\bbD_j\right)\tilde{\bbalpha}_j \nonumber\\
	-\tilde{\bbalpha}_j^\top\bbq_j[k]
\end{align}
where $\tilde{\bbalpha}_j$ denotes the $(N-1)M\times 1$ vector obtained by removing the entries of $\bbalpha_j$ indexed by $\mathcal{I}_j:=\{(j-1)M+1, \ldots, jM\}$. Similarly, $\tilde{\bbK}_j$ collects columns of $\tilde{\bbK}$ excluding the columns indexed by $\mathcal{I}_j$, the block-diagonal matrix $\bbD_j$ is obtained by eliminating rows and columns of $\bbD$ indexed by $\mathcal{I}_j$, while $\bbq_j[k]$ is constructed by removal of entries indexed by $\mathcal{I}_j$ from $\rho \bbD^{1/2}\bbgamma_j[k]+\tilde{\bbK}^\top\bby_j-\bbD^{1/2}\bbxi_{j}[k]-b_{jj}[k]\tilde{\bbK}^\top\bbx_j$. The per-column subproblem~\eqref{admm:obj:alpha} is an unconstrained quadratic optimization problem which, assuming $\left(\tilde{\bbK}_j^\top \tilde{\bbK}_j+\rho \bbD_j\right)$ is invertible, admits the following closed-form solution per $j$
\begin{align}
\label{admm:alpha1}
	\tilde{\bbalpha}_{j}[k+1]=&\left(\tilde{\bbK}_j^\top \tilde{\bbK}_j+\rho \bbD_j\right)^{-1}\bbq_j[k].
\end{align}
Upon evaluation of~\eqref{admm:alpha1}, column $j$ of $\bbW_{\alpha}[k+1]$ is updated by zero-padding $\tilde{\bbalpha}_{j}[k+1]$, that is,
\begin{align}
\label{admm:alpha}
	\bbalpha_j[k+1]= [\tilde{\bbalpha}_{1j}^\top , \ldots , \tilde{\bbalpha}_{(j-1)j}^\top, 
	\boldsymbol{0}_{M\times 1}^{\top}, 
	\tilde{\bbalpha}_{(j+1)j}^\top, \ldots, \tilde{\bbalpha}_{Nj}^\top]^{\top}
\end{align}
where $\boldsymbol{0}_{M \times 1}$ denotes the $M\times 1$ all-zero vector. 

The per-column update \eqref{admm:alpha} entails inversion of $M(N-1)\times M(N-1)$ matrices, which quickly becomes computationally prohibitive as the network grows in size. To circumvent this computational burden, we adopt the matrix inversion lemma, by recognizing that
\begin{align}
\label{eq:mat:inv}
	&\left(\tilde{\bbK}_j^\top \tilde{\bbK}_j+\rho \bbD_j\right)^{-1} \;\; \nonumber\\
	=&  \frac{1}{\rho} \left( \bbD_j^{-1}
		- \bbD_j^{-1}\tilde{\bbK}^\top \left(\rho\bbI+\tilde{\bbK}\bbD_j^{-1}\tilde{\bbK}_j^\top\right)^{-1}\tilde{\bbK}_j\bbD_j^{-1} \right)
\end{align}
which only requires inversion of an $M\times M$ matrix, ensuring that the computational complexity of the update~\eqref{admm:alpha1} does not grow with the network size $N$. 

In order to obtain $\bbB[k+1]$, first set $b_{ij}=0$ for all off-diagonal entries as required by the equality constraints on $\bbB$. It turns out that \eqref{admm:2} is separable over the diagonal entries $b_{jj}$, that is,
\begin{align}
\label{admm:b}
\bbB[k+1] = \underset{b_{11}, \ldots, b_{NN}}{\arg\min} \sum_{j=1}^N \|\bby_j-\tilde{\bbK}\bbalpha_{j}[k+1]-b_{jj}\bbx_j\|_2^2.
\end{align}
Per entry $j$,~\eqref{admm:b} boils down to a scalar unconstrained quadratic minimization problem with the closed-form solution
\begin{align}
\label{admm:b}
	b_{jj}[k+1]=\bbx_j^\top(\bby_j-\tilde{\bbK}\bbalpha_{j}[k+1])/{\bbx_j^\top \bbx_j}.
\end{align}
Finally,~\eqref{admm:3} can be cast as
\begin{align}
\label{admm:gmma_sub}
\bbGamma[k+1] = &\underset{\{\bbgamma_{ij}\}}{\arg \min} \;\; \sum\limits_{j=1}^N\sum_{i=1}^N \|\bbgamma_{ij}\|_2- \sum_{j=1}^N\sum_{i=1}^N\boldsymbol{\xi}_{ij}^\top[k]\bbgamma_{ij}  \nonumber\\
&+(\rho/2) \sum_{j=1}^N\sum_{i=1}^N \big \|\bbK_i^{1/2}\bbalpha_{ij}[k+1]-\bbgamma_{ij} \big \|_2^2
\end{align}
which reduces to a \emph{group Lasso} solver per index $j$ \cite{yuan2006model}. Per component vector $\bbgamma_{ij}$,~\eqref{admm:gmma_sub} can be solved in closed form via the so-termed \emph{group shrinkage} operator for each $i$ and $j$, yielding
\vspace{2mm}
\begin{align}
\label{admm:gmma}
	\bbgamma_{ij}[k]=\mathcal{P}_{\lambda/\rho}\left(\bbK_i^{1/2}\bbalpha_{ij}[k+1]+\bbxi_{ij}[k]/\rho\right)
\end{align}
which is defined as 
\vspace{2mm}
\begin{align}
\label{admm:soft_thresh}
		\mathcal{ P}_{\lambda}(\bbz):=\frac{\bbz}{\|\bbz\|_2}\max (\|\bbz\|_2-\lambda,0).
\end{align}
Together with the gradient ascent step over the dual variables, Algorithm~\ref{algo:admm} summarizes the iterations resulting from the developed ADMM solver for network topology inference task. 

Admittedly, even with the adoption of the matrix inversion lemma, ADMM incurs a considerable computational burden due to the reduced matrix inversion complexity. Nevertheless, it is important to point out that ADMM has well-documented merits in large-scale decentralized processing, where it may be feasible to split the problem across a ``coordinated'' network of computing agents; see e.g.,~\cite{schizas3,schizas4} and references therein. Due to space considerations, exploration of an ADMM-based decentralized implementation of Algorithm~\ref{algo:admm} is beyond the scope of the present paper. Instead, the sequel will focus on developing a first-order inference algorithm that leverages advances in proximal gradient descent approaches. As will be shown later, the developed approach mitigates the inherent matrix inversion complexity associated with ADMM.

%%%%%%%%%%%%%%%%%%%%%%%%%%%%%%%%%%%%%%%%%%%%%%%%%%%%
\begin{algorithm}[t] %[h]
	\caption{ADMM for kernel-based topology inference}\label{algo:admm}
	\begin{algorithmic} %[1]
		\State\textbf{Input:}~$\bbY$, $\bbX$, $\lambda$, $\bbK$, $\tau$
		\vspace{1mm}
		\State\textbf{Initialize:}~$\bbA[0]=\boldsymbol{0}_{N\times N}$, $\bbB[0]=\boldsymbol{0}_{N\times N}$, $k=0$
		\vspace{1mm}
		\While {\text{not converged}}
			\For{$j=1,\ldots , N$ (in parallel)}
		\State Update $\tilde{\bbalpha}_j[k+1]$ via \eqref{admm:alpha1}
		
		\State Update $\bbalpha_j[k+1]$ via \eqref{admm:alpha}
		
		\State Update $b_{jj}[k+1]$ via \eqref{admm:b}
		
		\State Update $\bbgamma_{ij}[k+1]$, $\forall i\neq j$ via \eqref{admm:gmma}

		\EndFor		
		\vspace{1mm}
		\State Update $\bbW_{\alpha}[k+1]$, $\bbB[k+1]$ and $\bbGamma[k+1]$
		\State $ \boldsymbol{\Xi}[k+1]  =  \boldsymbol{\Xi}[k]+\rho (\bbD^{1/2}\bbW_{\alpha}[k+1]-\bbGamma[k+1])$
		\State $k = k + 1$
		\EndWhile
		\vspace{1mm}
		\State \underline{ Edge identification:}\\
		\vspace{1mm}
		$~\hat{a}_{ij}=1$ if $\|\bbalpha_{ij}[k]\|\geq \tau$, else $\hat{a}_{ij}[k]=0$ for all $i$,$j$ \\
		\Return $\hat{\bbA}$ and $\hat{\bbB}=\bbB[k]$		
	\end{algorithmic}
\end{algorithm}
%%%%%%%%%%%%%%%%%%%%%%%%%%%%%%%%%%%%%%%%%%%%%%%%%%%%%%

%%
\vspace{-4mm}
\subsection{Proximal gradient iterations}\label{sec:pg}
Since the decoupled convex cost in~\eqref{eq:prob:mod} reduces to a weighted version of the group Lasso solver, it is prudent to first define the change of variables $\boldsymbol{\zeta}_{ij}:= \bbK_i^{1/2}\bbalpha_{ij}$. Consequently,~\eqref{eq:prob:mod} can be reformulated for each node $j$ to yield
\begin{align}
\label{eq:prob:beta}
\nonumber
\{\{\hat{\bbzeta}_{ij}\}, \{ \hat{b}_{ij}\}\} &  \\
\nonumber
= \underset{ \{ \bbzeta_{ij} \}, \{b_{ij}\}}{\arg \min} 
	\frac{1}{2}& \bigg \| \bby_j-\sum_{i=1}^N\bbK_i^{1/2} \bbzeta_{ij}-\sum_{i=1}^Nb_{ij}\bbx_j \bigg \|_2^2 + \lambda \|\bbzeta_{ij}\|_2\nonumber\\
	\text{s.t.} & \;\; \bbzeta_{ij}=\boldsymbol{0}, ~~\forall i= j, \quad b_{ij}=0, ~~\forall i\neq j.
\end{align} 
Note that group-sparsity over the blocks $\{ \bbalpha_{ij} \}$ as induced by the weighted regularizer in~\eqref{eq:prob:mod} is inherited by~\eqref{eq:prob:beta}. This implies that unveiling the hidden network topology amounts to identifying which vectors in the set 
 $ \{ \bbzeta_{ij} \} $ are nonzero. Furthermore, the cost in~\eqref{eq:prob:beta} is convex and consists of the sum of differentiable and non-differentiable terms. Key to our first-order algorithm developed in this subsection is recognizing that this well-known problem structure is amenable to PG optimization tools; see e.g.,~\cite{beck2009fast} for a comprehensive review.
 
Once again temporarily ignoring the linear constraints, let 
$\bbv^j := [\bbzeta_j^\top ~b_{jj}]^\top$,  where $\bbzeta_{j}:=[\bbzeta_{1j}^\top,\ldots, \bbzeta_{Nj}^\top ]^\top$. 
Furthermore, let $\breve{\bbK} := [\bbK_1^{1/2},\ldots,\bbK_{N}^{1/2}]$, and define $f(\bbv^j):=(1/2)\|\bby_j- \breve{\bbK}\bbzeta_{j}-b_{jj}\bbx_j\|_2^2$, and $g(\bbv^j):=\lambda \sum_{i\neq j}\|\bbzeta_{ij}\|_2$. It can be shown that the gradient of $f(\bbv^j)$ is \emph{Lipschitz continuous} with (minimum) Lipschitz constant 
$ L_f :=\lambda_{\rm max} \left(\bbP_j^\top \bbP_j\right)$, where $\bbP_j:=[\breve{\bbK}~ \bbx_j]$, that is,  
$ \| \nabla f(\bbv^j_1) - \nabla f(\bbv^j_2)   \| \leq L_f \| \bbv^j_1  - \bbv^j_2\|, \; \forall \bbv^j_1, \bbv^j_2 $ in the domain of $f$, 
with $\lambda_{\rm max}(\bbZ)$ denoting the largest eigenvalue of $\bbZ$. 
Due to the Lipschitz continuity of $f(\bbv^j)$, it holds that
\vspace{2mm}
\begin{align}
\label{e	q:pg0}
\nonumber
	f(\bbv_1^j) & \leq  f(\bbv_2^j) + \langle\nabla f(\bbv_2^j),\bbv_1^j-\bbv_2^j \rangle  +  (L_f/2)  \|\bbv_1^j-\bbv_2^j\|_2^2 \\
	& =:   Q(\bbv_1^j, \bbv_2^j)
\end{align} 
where $Q(\bbv_1^j, \bbv_2^j)$ is a quadratic upper-bound of $f(\bbv_1^j)$ evaluated at $\bbv_2^j$. In general, PG algorithms judiciously select $\bbv_2^j$ from the domain of $f$, and iteratively minimize the upper bound $Q(\bbv^j, \bbv_2^j) + g(\bbv^j)$ of the cost $f(\bbv^j) + g(\bbv^j)$, until convergence is attained. Accordingly, adopting the iteration index $k$ and setting $\bbv_2^j = \bbv^j[k]$, computation of the next iterate $\bbv^j[k+1]$ amounts to solving (cf. \cite{beck2009fast})
\vspace{1mm}
\begin{align}
\label{eq:pg1}
\nonumber
\bbv^j[k+1]  :=&~  \underset{\bbv^j}{\text{arg min }} Q(\bbv^j,{\bbv}^j[k]) + g(\bbv^j) \\
		=&~  \underset{\bbv^j}{\text{arg min }} (L_f/2) \| \bbv^j - \mathbf{u}^j[k] \|_2^2 + g(\bbv^j).
\end{align}
It turns out that $\mathbf{u}^j[k] := \bbv^j[k] - (1/L_f) \nabla f(\bbv^j[k])$ is an ordinary gradient descent step evaluated at $\bbv^j[k]$. As will be shown next, evaluation of the \emph{proximal operator} $\text{arg} ~\underset{\bbv^j}{\text{min}} (L_f/2) \| \bbv^j - \mathbf{u}^j[k] \|_F^2 + g(\bbv^j) $ is straightforward, rendering this iterative approach quite attractive. In fact, adoption of iterative PG solvers for most regularized estimators is well motivated because $\bbv^j[k+1]$ is available in closed form. 

In order to incorporate the linear equality constraints, let $\breve{\bbK}_{s}$ denote the $M \times M(N-1)$ matrix obtained by removing all columns indexed by $\mathcal{I}_j$ from $\breve{\bbK}$, and $\tilde{\bbzeta}_j:=[\bbzeta_{1j}^\top, \ldots, \bbzeta_{(j-1)j}^\top, \bbzeta_{(j+1) j}^\top, \ldots, \bbzeta_{Nj}^\top]^{\top}$. Modifying the cost function accordingly, $f(\bbv^j)$ can then be written as
\begin{align}
\label{eq:pg2}
	f(\bbv^j):=(1/2)\|\bby_j-\breve{\bbK}_{j}\tilde{\bbzeta}_{j}-b_{jj}\bbx_j\|_2^2.
\end{align}
Clearly, $ \nabla f(\bbv^j)$ decouples into gradients with respect to $\tilde{\bbzeta}_{j}$ and $b_{jj}$ to obtain
\begin{subequations}
\begin{equation}
\label{ista:grad:a}
    \nabla_{\tilde{\bbzeta}_{j}} f(\bbv^j[k]) =\breve{ \bbK}_{j}^\top \left( \breve{\bbK}_{j} \tilde{\bbzeta}_{j}[k]   + b_{jj}[k] \bbx_j  - \bby_j \right)   	
\end{equation}
\begin{equation}
	\label{ista:grad:b}
		\nabla_{b_{jj}} f(\bbv^j[k]) = \left( \breve{ \bbK}_{j} \tilde{\bbzeta}_{j}[k] + b_{jj}[k] \bbx_j  - \bby_j \right)^{\top} \bbx_j 
\end{equation}
\end{subequations}
which upon substitution into~\eqref{eq:pg1}, yields the following provably-convergent PG iterations per node $j$
\begin{subequations}
\begin{eqnarray}
\label{ista:update:z}
\bbz_{j}[k] &=& \tilde{\bbzeta}_{j}[k] - (1/L_f) \nabla_{\tilde{\bbzeta}_{j}}f(\bbv^j[k]) \\
\label{ista:update:a}
\bbzeta_{ij}[k+1] &=& \mathcal{P}_{\lambda/L_f}\left(\bbz_{ij}[k]\right)\\
\label{ista:update:b}
b_{jj}[k+1] &=& b_{jj}[k] - (1/L_f) \nabla_{b_{jj}}f(\bbv^j[k]).
\end{eqnarray}
\end{subequations}
In step~\eqref{ista:update:a}, $\bbz_{ij}$ denotes a subvector of $\bbz$ with entries indexed by $\mathcal{I}_j$, while the projection operator $\mathcal{P}_{\lambda/L_f}(.)$ has been previously defined in~\eqref{admm:soft_thresh}. The sought estimate of $\bbzeta_j[k+1]$ is now obtained by stacking and zero-padding $\bbzeta_{ij}[k+1]$ as
\begin{multline}
\label{pg:zeta}
	\bbzeta_j[k]:=\big[ \bbzeta_{1j}^\top[k+1], \ldots, \bbzeta_{(j-1)j}^\top[k+1], \\ 
	\boldsymbol{0}^{\top}_{1\times M}, \bbzeta_{(j+1) j}^\top[k+1],\ldots,\bbzeta_{Nj}^\top[k+1] \big]^\top.
\end{multline}
Entries of the adjacency matrix $\bbA$ are then set by identifying the nonzero vectors $\bbzeta_{ij}$, while entries of the diagonal matrix $\bbB[k]$ are selected from the estimates of $\{ b_{ii}[k] \}_{i=1}^N$ associated with the appropriate indices. Algorithm~\ref{algo1} summarizes all the steps constituting the developed PG algorithm.

%%%%%%%%%%%%%%%%%%%%%%%%%%%%%%%%%%%%%%%%%%%%%%%%%%%%
\begin{algorithm}[t] %[h]
	\caption{Kernel-based PG inference algorithm}\label{algo1}
	\begin{algorithmic} %[1]
		\State\textbf{Input:}~$\bbY$, $\bbX$, $\bbK$, $\tau$, $\lambda$
		
		\State\textbf{Initialize:}~$\bbA[0]=\boldsymbol{0}_{N\times N}$, $\bbB[0]=\boldsymbol{0}_{N\times N}$, $k=0$
		
		\While {\text{not converged}}
			\For{$j=1,\ldots , N$ (in parallel)}
		
		\State {\bf(S1)} \underline{Gradient calculation:}\\ 
		\hspace{1.8cm}Calculate gradients at $\tilde{\bbzeta}_j[k]$ via \eqref{ista:grad:a}\\
	    \hspace{1.8cm}Calculate gradients at $b_{jj}[k]$ via \eqref{ista:grad:b}
	    
		\State {\bf(S2)} \underline{Variable updates:}\\
		\hspace{1.8cm}Update $\bbz_j[k+1]$ via \eqref{ista:update:z}\\
		\hspace{1.8cm}Update $\bbzeta_{ij}[k+1]$, $\forall ~ j\neq i$ via \eqref{ista:update:a}\\
		\hspace{1.8cm}Update $b_{jj}[k+1]$ via \eqref{ista:update:b}\\
		\hspace{1.8cm}Update $\bbzeta_{j}[k+1]$ via \eqref{pg:zeta}

		\EndFor		
		\State $k = k + 1$
		\EndWhile
		\State \underline{Edge identification:}\\
		Set $\hat{a}_{ij}=1$ if $\|\bbzeta_{ij}[k]\|\geq \tau$, else $\hat{a}_{ij}[k]=0$ for all $i,j$\\
		\Return $\hat{\bbA}$ and $\hat{\bbB}=\bbB[k]$
			
	\end{algorithmic}
\end{algorithm}
%%%%%%%%%%%%%%%%%%%%%%%%%%%%%%%%%%%%%%%%%%%%%%%%%%%%%%

	%%%%%%%%%%%%%%%%%%%%%%%%%%%%%%%%%%%%%%%%%%%%%%%%%%%%
\begin{algorithm}[t] %[h]
	\caption{Kernel-based APG inference algorithm}\label{algo:fista}
	\begin{algorithmic} %[1]
		\State\textbf{input:}~$\bbY$, $\bbX$, $P$, $\lambda$, $\tau$
		
		\State\textbf{initialize:}~$\bbA[0]=\boldsymbol{0}_{N\times NP}$, $\bbB[0]=\boldsymbol{0}_{N\times NP}$
		
		\For {$k = 1, 2,\ldots, K$}
			\For{$j=1,\cdots N$ (in parallel)}
		
		\State {\bf(S1)} \underline{Intermediate variable updates:}\\
		\hspace{1.8cm}Update $\bbz_j[k]$ via \eqref{fista:z}  \\
		\hspace{1.8cm}Update $d_{jj}[k]$ via \eqref{fista:d}
		
		\State {\bf(S2)} \underline{Gradient calculation:}\\ 
		\hspace{1.8cm}Calculate gradients at $\tilde{\bbz}_{j}[k]$ via \eqref{fista:grad:a}\\
	    \hspace{1.8cm}Calculate gradients at $d_{jj}[k]$ via \eqref{fista:grad:b}
		
		\State {\bf(S3)} \underline{Variables updates:}\\
		\hspace{1.8cm}Update $\tilde{\bbz}_j[k+1]$ via \eqref{fista:update:z}\\
		\hspace{1.8cm}Update $\bbzeta_{ij}[k+1]$, $\forall ~ j\neq i$ via \eqref{fista:update:a}\\
		\hspace{1.8cm}Update $b_{jj}[k+1]$ via \eqref{fista:update:b}\\
		\hspace{1.8cm}Update $\bbzeta_j[k+1]$ via \eqref{pg:zeta}

		\EndFor		
		\EndFor
		\State \underline{Edge identification:}\\
		   Set $~\hat{a}_{ij}=1$ if $\|\bbzeta_{ij}[k]\|\geq \tau$, else $\hat{a}_{ij}[k]=0$ for all $i, j$\\
		\Return $\hat{\bbA}$ and $\hat{\bbB}=\bbB[k]$
		
	\end{algorithmic}
\end{algorithm}
%%%%%%%%%%%%%%%%%%%%%%%%%%%%%%%%%%%%%%%%%%%%%%%%%%%%%%

Although Algorithm~\ref{algo1} is computationally simple and provably convergent, it has been shown to converge slowly at times, and several \emph{accelerated} variants have been developed. Prominent among these are the so-termed \emph{accelerated proximal gradient (APG)} approaches that offer a worst-case convergence guarantee of $\mathcal{O}(1/\sqrt{\epsilon})$ iterations for an $\epsilon$-optimal solution [cf. $\mathcal{O}(1/{\epsilon})$ for PG methods]; see e.g.,~\cite{nesterov2005smooth} for details. The key to this convergence improvement lies in computing $\mathbf{u}^j[k]$ in~\eqref{eq:pg1} using a linear combination of the two most recent iterates, $\bbv^j[k-1]$ and $\bbv^j[k-2]$. Adopting this strategy, convergence improvement of Algorithm~\ref{algo1} is straightforward, and amounts to the following modifications [cf. \eqref{eq:pg1}]
\begin{align}
\bbv^j[k+1] =& \arg\min_{\bbv^j} \;\; Q(\bbv^j,\bbu^j[k])\nonumber\\
		=&\arg\min_{\bbv^j} \;\; (L_f/2) \big \| \bbv^j - \big( \breve{\bbu}^j[k] \nonumber\\ 
		&\hspace{1.0cm} - (1/L_f) \nabla f(\breve{\bbu}^j[k]) \big) \big \|_2^2 + g(\bbv^j)
\end{align}
where 
\begin{align}
	\breve{\bbu}^j[k]&:=\bbv^j[k-1]+\left( \frac{\beta_{k-1}-1}{\beta_k}\right) \left(\bv^j[k-1]-\bv^j[k-2]\right)\\
	\beta_k& =\left(1+\sqrt{4\beta_{k-1}^2+1} \right)/2.
	\label{fista:beta}
\end{align}
Similar to \eqref{ista:update:z}-\eqref{ista:update:b}, it turns out that the resulting APG updates can be written as
\begin{subequations}
	\begin{align}
	\label{fista:z}
		\bbz_{j}[k] & = \bbzeta_j[k]+\left( \frac{\beta_{k-1}-1}{\beta_k} \right)(\bbzeta_j[k]-\bbzeta_j[k-1])\\
	\label{fista:d}
	d_{jj}[k+1]& =  b_{jj}[k]+\left( \frac{\beta_{k-1}-1}{\beta_k} \right) ({b_{jj}}[k]-{b_{jj}}[k-1])\\
	\label{fista:update:z}
	\tilde{\bbz}_j[k]&= 	\bbz_{j}[k]-(1/L_f)\nabla_{\tilde{\bbz}_{j}}f[k]\\	
	\label{fista:update:a}
	\bbzeta_{ij}[k+1]& =\mathcal{P}_{\lambda/L_f}\left(\tilde{\bbz}_{ij}[k]\right)\\
	\label{fista:update:b}
	b_{jj}[k+1]& = d_{jj}[k]-(1/L_f)\nabla_{d_{jj}}f[k].
	\end{align}
	\end{subequations}
\noindent Clearly, $ \nabla f(\bbv)$ decouples into gradients with respect to $\bbzeta_{j}$ and $b_{ii}$, that is,
\begin{subequations}
\begin{align}
	\label{fista:grad:a}
    \nabla_{\tilde{\bbz}_{j}} f(\bbv^j[k])&= \bbK_{s,j}^\top \left( \bbK_{s,j} \tilde{\bbz}_{j}[k]   +d_{ii}[k] \bbx_i  
     - \bby_j \right) 	\\
	\label{fista:grad:b}
		\nabla_{d_{ii}} f(\bbv^j[k])&= \left( \bbK_{s,j}\tilde{\bbz}_{j} [k]  
		+b_{ii}[k] \bbx_j
		-\bby_j  \right)^{\top}\bbx_j.
\end{align}
\end{subequations}
With a similar update for  $\bbzeta_j[k]$ as highlighted in \eqref{pg:zeta}, the developed accelerated variant of the PG solver is summarized in Algorithm \ref{algo:fista}.

\vspace{3mm}
\section{Special case: Polynomial SEM}
 \label{sec:polysem}
The developed kernel-based topology inference approach is quite general, and eliminates the need to explicitly specify the functions $\{ \phi_p(.) \}_{p=1}^P$ a priori. Nevertheless, if $\phi_p(.)$ is known to belong to a specific family, e.g. polynomial functions, it may be possible to derive more efficient estimators that capitalize on such prior knowledge. Polynomial SEMs have been advocated in e.g.,~\cite{harring2012comparison,kelava2014nonlinear}, with the underlying network topology assumed known a priori. This section does not presume such prior knowledge, and postulates that $\psi_{\text{en}, j} (y_{jm})$ admits a polynomial expansion whose coefficients capture the unknown topology; that is,
\begin{equation}
\label{eq:mod:poly}
	 \psi_{\text{en}, ij} (y_{im})  =  \sum\limits_{p = 1}^P   c_{ijp} y_{im}^p 
\end{equation}
with the summand $\phi_p(y_{im})=y_{im}^p$. Defining $\tilde{\bby}_{im}:=[y_{im},y_{im}^2,\ldots,y_{im}^P]^\top$,~\eqref{eq:mod:poly} can be equivalently written as $\psi_{\text{en}, ij} (y_{im}) = \bbc_{ij}^\top \tilde{\bby}_{im}$, which is clearly a special case of the endogenous term in~\eqref{mod:nonlinear}, with $\bbphi(y_{im})=\tilde{\bby}_{im}$. More specifically, adopting the polynomial endogenous term~\eqref{eq:mod:poly}, one obtains 
 \begin{align}
 	{\bby}_j=\sum_{i\neq j}\tilde{\bbY}_{i}\bbc_{ij}a_{ij}+b_{jj}\bbx_j+\bbe_j
 \end{align} 
where $\tilde{\bbY}_{i} { := }[\tilde{\bby}_{i1},\ldots,\tilde{\bby}_{iM}]^\top$, with all other variables previously defined. Furthermore, with $\bbY:=[\bby_1, \dots,\bby_N]$, $\bbX:=[\bbx_1,\dots,\bbx_N]$, and $\tilde{\bbY}:=[\tilde{\bbY}_1, \dots,\tilde{\bbY}_N]$, one obtains the corresponding nonlinear matrix SEM
\begin{align}
	\bbY=\tilde{\bbY}\bbW+\bbX\bbB+\bbE
	\label{mod:nonlinear_pol}
\end{align}
where, as earlier observed, $\bbW$ exhibits a block structure dictated by the hidden network topology. By similarly adopting a regularized LS approach to estimate the unknown matrices in~\eqref{mod:nonlinear_pol}, the following estimator is advocated
\begin{align}
\label{eq:obj}
    \nonumber
	\{\hat{\bbW},\hat{\bbB}\} = \underset{\bbW, \bbB}{\arg\min}  & \;\; (1/2) \|\bbY-\tilde{\bbY}\bbW-\bbX\bbB\|_F^2+\lambda \|\bbW\|_{\mathcal{I},1}\\
	\text{s.t.} &  \;\; \bbw_{jj}=\boldsymbol{0} \; \forall j,  \; b_{ij}=0 \;\; \forall i\neq j
\end{align} 
where $\bbw_{ij}:= a_{ij}\bbc_{ij}$, while the constraints are once again enforced to ensure absence of self-loops, and that $\bbB$ is a diagonal matrix. Since edge sparsity is reflected in the block-sparsity of $\bbW$, the well-known group-sparse regularizer is adopted as an additive penalty defined as
\begin{align}
	 \|\bbW\|_{\mathcal{I},1}:=\sum_{i=1}^N\sum_{j=1}^N\left( \sum_{k\in \mathcal{I}_{j}}w_{ik}^2 \right)^{1/2}=\sum_{i,j}\|\bbw_{ij}\|_2.
\end{align}
As earlier alluded to in a similar context, $\hat{\bbW}$ contains sufficient information to recover the adjacency matrix $\mathbf{A}$, namely by identifying the non-zero blocks. Recognizing that the cost function in~\eqref{eq:obj} retains the properties of estimators put forth in the prequel (convex with a non-differentiable, additive term), proximal gradient iterations will be adopted next to estimate the unknowns.

Let  $\tilde{\bbw}_{j}$ denote the $ (N-1)P\times 1$ vector obtained by removing entries indexed by $\mathcal{I}^{'}_j:=\{jP-P+1,\ldots,jP\}$ from the $j$th column of $\bbW$. Similarly, let $\tilde{\bbY}_{-i}$ denote the $(N-1)P\times M$ matrix obtained by removing all columns indexed by $\mathcal{I}^{'}_i$ from $\tilde{\bbY}$. Modifying the cost function to incorporate the linear equality constraints, and defining $\bv^j:= [\tilde{\bbw}_j^\top~ \bbx_j^\top]^\top$, it turns out that 
\begin{align}
\label{eq:pg2}
	f(\bv^j):= (1/2) \big \|\bby_j-\tilde{\bbY}_{-j}\tilde{\bbw}_{j}-b_{jj}\bbx_j \big \|_2^2.
\end{align}
Leveraging the PG iterations from~\eqref{eq:pg1}, $ \nabla f(\bv^j[k])$ decouples into gradients with respect to $\bbw_{j}$ and $b_{jj}$, that is,
\begin{subequations}
\begin{equation}
\label{poly_ista:grad:a2}
    \nabla_{\tilde{\bbw}_{j}} f(\bbv^j[k]) = \tilde{\bbY}_{j}^\top \left(\tilde{\bbY}_{j} \tilde{\bbw}_{j}[k]  + b_{jj}[k]\bbx_j - \bby_j \right) 
\end{equation}
\begin{equation}
	\label{poly_ista:grad:b2}
		\nabla_{b_{jj}} f(\bbv^j[k]) =\left(\tilde{\bbY}_{j} \tilde{\bbw}_{j}[k]  + b_{jj}[k]\bbx_j - \bby_j \right)^\top \bbx_j
\end{equation}
\end{subequations}
which leads to the following steps per iteration $k$
\begin{subequations}
\begin{eqnarray}
\label{ista:update:z2}
\bbz_{j}[k] &=& \tilde{\bbw}_{j}[k] -(1/L_f) \nabla_{\tilde{\bbw}_{j}}f(\bv^j[k]) \\
\label{ista:update:a2}
\bbw_{ij}[k+1] &=& \mathcal{P}_{\lambda/L_f}\left(\left[\bbz_{j}[k] \right]_{\mathcal{I}_j}\right)\\
\label{ista:update:b2}
b_{ii}[k+1] &=& b_{jj}[k] - (1/L_f) \nabla_{b_{jj}}f(\bv^j[k])
\end{eqnarray}
\end{subequations}
where $\left[ \bbz \right]_{\mathcal{I}_j}$ denotes a subvector of $\bbz$ whose entries comprise coefficients indexed by elements of $\mathcal{I}_j$. Rows of $\hat{\bbW}$ are similarly obtained by stacking and zero-padding [cf.~\eqref{pg:zeta}]
\begin{multline}
		\bbw_j[k]=\big[\bbw_{1j}^\top[k],\cdots, \bbw_{(i-1)j}^\top[k], \\ 
		\boldsymbol{0}_{1\times P}^{\top}, \bbw_{(i+1)j}^\top[k],\cdots,\bbw_{Nj}^\top[k]\big]^\top.
\label{ista:a2_padding}
\end{multline}
Algorithm~\ref{algo:ista} summarizes the developed PG solver for the nonlinear polynomial SEM. Acceleration of Algorithm~\ref{algo:ista} for faster convergence is straightforward, and follows along similar modifications to those highlighted in the prequel.

%%%%%%%%%%%%%%%%%%%%%%%%%%%%%%%%%%%%%%%%%%%%%%%%%%%%
\begin{algorithm}[t] %[h]
	\caption{PG solver for polynomial SEM}\label{algo:ista}
	\begin{algorithmic} %[1]
		\State\textbf{Input:}~$\bbY$, $\bbX$, $\lambda$, $P$, $\tau$
		
		\State\textbf{Initialize:}~$\bbW[0]=\boldsymbol{0}_{N\times NP}$, $\bbB[0]=\boldsymbol{0}_{N\times NP}$, $k=0$
		
		\While {\text{not converged}}
			\For{$j=1,\ldots , N$ (in parallel)}
					
		\State {\bf(S1)} \underline{Gradient calculation:}\\ 
		\hspace{1.8cm}Calculate gradients at $\tilde{\bbw}_j[k]$ via \eqref{poly_ista:grad:a2}\\
	    \hspace{1.8cm}Calculate gradients at $b_{jj}[k]$ via \eqref{poly_ista:grad:b2}
		
		\State {\bf(S2)} \underline{Variable updates:}\\
		\hspace{1.8cm}Update $\bbz_j[k+1]$ via \eqref{ista:update:z2}\\
		\hspace{1.8cm}Update $\bbw_{ij}[k+1]$, $\forall ~ j\neq i$ via \eqref{ista:update:a2}\\
		\hspace{1.8cm}Update $b_{jj}[k+1]$ via \eqref{ista:update:b2}\\
		\hspace{1.8cm}Update $\bbw_j[k+1]$ via \eqref{ista:a2_padding} 

		\EndFor		
		\State Update $\bbW[k+1]$ and $\bbB[k+1]$
		\State $k = k + 1$
		\EndWhile
		\State \underline{Edge identification:}\\
		   Set $~\hat{a}_{ij}=1$ if $\|\bbw_{ij}[k]\|\geq \tau$, else $\hat{a}_{ij}[k]=0$ for all $i, j$\\
		\Return $\hat{\bbA}$ and $\hat{\bbB}=\bbB[k]$
		
	\end{algorithmic}
\end{algorithm}
%%%%%%%%%%%%%%%%%%%%%%%%%%%%%%%%%%%%%%%%%%%%%%%%%%%%%%	
%
%\noindent\textbf{Remark 3:}
\begin{remark}
It turns out that polynomial regression is a special case of kernel regression, with an appropriate choice of polynomial kernel; see e.g.,~\cite{franz2006unifying}. Not surprisingly, a similar link can be drawn for the kernel-based SEM advocated in Section \ref{sec:kernels}. Specifically, upon adoption of the kernel function $k(x,y)=\sum_{p=1}^P (xy)^p$,~\eqref{eq:prob:mod} can be shown equivalent to \eqref{eq:obj}. Note that even when the form of $ \psi_{\text{en}, ij} (y_{im})$ is explicitly known, Algorithms~\ref{algo:admm} and~\ref{algo1} can still be run upon computation of the kernel matrices from nodal measurements $\{y_{im}\}$. However, comparing \eqref{ista:grad:a} and \eqref{ista:grad:b} with \eqref{poly_ista:grad:a2} and \eqref{poly_ista:grad:b2} reveals that when $P\ll M$, one incurs lower computational cost by explicitly modeling node dependencies via a polynomial SEM than the more general kernel-based approach. This is due to the smaller dimensions involved in the constituent matrix products.
\end{remark}

\vspace{1mm}
\section {Numerical Tests}
\label{sec:test}

\subsection{Tests on simulated data}
This subsection presents results of numerical tests conducted on synthetic data to assess the effectiveness of the developed algorithms.

\noindent\textbf{Data generation.} A Kronecker random graph with $N=64$ nodes was generated with a ``seed matrix'' 
\begin{align}
\bbS_0:=\left(
\begin{array}{cccc}
0 &	0&1&1\\
0 &	0&1&1\\
0 &	1&0&1\\
1 &	0&1&0
\end{array} \right) \nonumber
\end{align}
 to obtain a binary-valued $64\times 64$ matrix via repeated Kronecker products, namely $\bbS=\bbS_0\otimes\bbS_0\otimes\bbS_0$; see also \cite{leskovec2010kronecker}. The Kronecker graph with adjacency matrix $\bbA$ was then constructed by randomly sampling each entry from a Bernoulli distribution with $a_{ij}\sim\text{Bernoulli}(0.3s_{ij})$. 
 
With different values of $M$, entries of $\bbY\in\mathbb{R}^{N\times M}$ were randomly sampled from the standard normal distribution $(y_{nm} \sim \mathcal{N}(0,1))$. Kernel matrices $\{ \bbK_m \}$ were generated using prescribed kernels, that is, entry $(i,j)$ of $\bbK_m$ was set to $[\bbK_m]_{ij}=k(y_{im}, y_{jm})$, where the kernel function $k(\cdot, \cdot)$ is known a priori. Matrix $\bbB\in \mathbb{R}^{N\times N}$ was constructed as a diagonal matrix with entries drawn from an i.i.d. standardized normal distribution. Entries of coefficient vectors $\bbalpha_{ij}\in\mathbb{R}^M $ were drawn uniformly from the interval $[-0.2, 0.2]$, while noise terms were generated i.i.d. as $e_{ij}\sim \calN (0,\sigma_e^2)$, with $\sigma_e=0.01$. Finally, the exogenous matrix $\bbX$ was generated from other terms using the postulated nonlinear SEM, that is, $\bbX=(\bbY-\tilde{\bbK} \bbW_{\alpha}-\bbE)\bbB^{-1}$, where $\bbW_{\alpha}$ was constructed with the $(i, j)$-th block set to $\bbalpha_{ij}$. 

Experiments were run for different values of $M$, with thresholds that control presence or absence of an edge (denoted by $\tau$ in the listed algorithms) selected to obtain the best edge identification accuracy. Furthermore, sparsity-promoting regularization parameters ($\lambda$) were all judiciously selected to obtain the lowest edge identification  error rate (EIER), defined as
\begin{align}
	\text{EIER}:=\frac{\|\bbA-\hat{\bbA}\|_0}{N(N-1)}\times 100\%
\end{align}
with the operator $\|\cdot\|_0$ denoting the number of nonzero entries of its argument. For all experiments, error plots were generated using values of EIER averaged over $100$ independent runs. 
\begin{figure}[tpb!]
	\centering
	\includegraphics[scale=0.6]{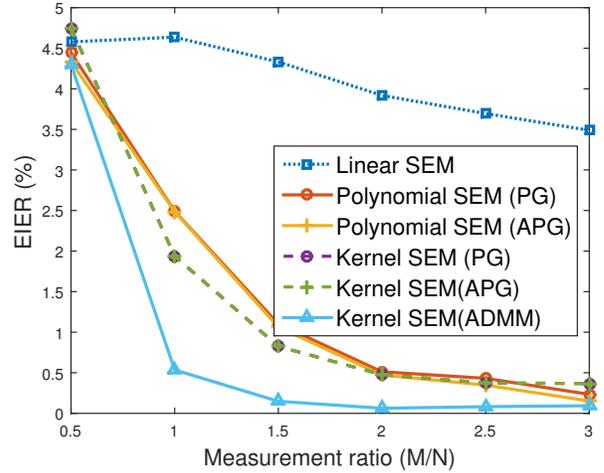}
\caption{EIER plotted against the measurement ratio (M/N) with simulated data generated via a polynomial kernel of order $P=2$. The nonlinear topology identification approaches markedly outperform the linear SEM.}\label{fig:er-poly2}
\end{figure}
%
%
%\begin{figure}[tpb!]
%	\centering
%	\includegraphics[scale=0.6]{poly3_new}
%\caption{Edge identification error rate vs. measurement ratio (M/N). Actural model generate with polynomial kernel with $P=3$  }\label{fig:er-poly3}
%\end{figure}
%
%

%
%\begin{figure}[tpb!]
%	\centering
%	\includegraphics[scale=0.6]{rbf1}
%	\caption{Edge identification error rate vs. measurement ratio (M/N). Actural model generate with Gaussian kernel with $\sigma^2=1$  }\label{fig:er-rbf1}
%\end{figure}
%

%
\begin{figure}[tpb!]
	\centering
	\includegraphics[scale=0.6]{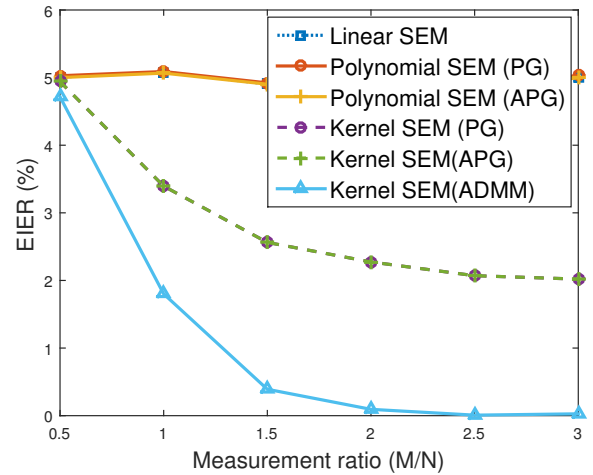}
\caption{EIER plotted against the measurement ratio with data generated using a Gaussian kernel with $\sigma^2=0.01$. In this case, adopting the polynomial SEM does not lead to a noticeable performance improvement over the linear SEM. Nevertheless, all kernel-based approaches markedly outperform the linear SEM.}
\label{fig:er-rbf01}
\end{figure}

\noindent\textbf{Test results.} 
Figures~\ref{fig:er-poly2} and~\ref{fig:er-rbf01} depict plots of the EIER against the measurement ratio ($M/N$), for polynomial and Gaussian kernels, respectively. Specifically, Figure~\ref{fig:er-poly2} plots the EIER when data are generated by~\eqref{mod:nonlinear}, using a polynomial kernel of order $P=2$. Clearly, adoption of nonlinear SEMs yields markedly better performance than topology inference approaches based on linear SEMs, as demonstrated by the lower EIER values. It turns out that running the ADMM solver consistently leads to lower EIER than its (A)PG counterparts. This performance gap can be attributed to parameter transformation for the latter solvers, that is, presence or absence of an edge was based on $\bbzeta_{ij}$, a linear transformation of $\bbalpha_{ij}$. 

Figure~\ref{fig:er-poly2} also depicts test results for experiments conducted with \emph{a fortiori} selection of the polynomial SEM, advocated in Section~\ref{sec:polysem}. Setting the polynomial order to $P=2$, the kernel-based solvers are observed to yield improved performance over the polynomial SEM. Although one can readily attribute this to the inherent model mismatch, it is worth noting that even solvers based on the polynomial SEM markedly outperform the linear SEM.

Figure~\ref{fig:er-rbf01} plots the EIER curves for the setting in which data were generated via Gaussian kernels $\kappa(x,y) := e^{-(x-y)^2/2\sigma^2}$, with the bandwidth parameter set to $\sigma^2=0.01$. The plot also depicts the EIER curves resulting from adopting a polynomial SEM of order $P=3$ for the same data. Not surprisingly, kernel-based approaches significantly outperform the conventional linear SEM. In this setting, adopting the polynomial SEM does not lead to better performance than the linear SEM, as was the case with the data generated via polynomial kernels. 

On the other hand, polynomial SEMs are attractive due to their reduced computational complexity, leading to algorithms that are on average three times  faster than the more general kernel-based alternatives. It is also worth pointing out that increasing the measurement ratio yields lower EIER, especially for the kernel-based algorithms. Figure~\ref{fig:net} depicts plots of actual and inferred adjacency matrices, under varying modeling assumptions. Plots of inferred adjacency matrices are based on a single realization of $M=128$ samples, and non-zero entries are illustrated in white. As demonstrated by the plots, accounting for nonlinearities yields more accurate recovery of the unknown network topology.
 
%Figure~\ref{fig:error} compares the performance of nonlinear with linear SEMs with respect to the number of misidentified edges (i.e., $\sum_{ij} | a_{ij} - \hat{a}_{ij}|$) plotted against the sample size $M$. Algorithm~\ref{algo:ista} (PG) and its accelerated variant (APG) were tested for the advocated nonlinear SEM with $\lambda=5$, whereas topology identification via the linear SEM was conducted by a sparsity-promoting PG algorithm. Upon estimating $\hat{\mathbf{W}}$, we set $\hat{a}_{ij} = 1$ if $\|(\hat{\bbf}_i)_{\mathcal{I}_j}\|\geq 10^{-2}$, otherwise $\hat{a}_{ij} = 0$ for the nonlinear SEM. For the linear SEM estimator, a sparsity-promoting regularization parameter $\mu = 15$ was selected via cross-validation, and edge $(i,j)$ was declared present only if $\hat{a}_{ij}>10^{-2}$. Not surprisingly, Figure~\ref{fig:error} demonstrates that leveraging nonlinear SEMs yields markedly better performance than topology inference via a linear SEM. In addition, it turns out that running the accelerated algorithm leads to fewer misidentified edges than Algorithm~\ref{algo:ista} when $M$ is small.
%%
%\begin{figure}[h!]
%	\centering
%	\includegraphics[scale=0.6]{error}
%	\caption{Number of misidentified edges vs. sample size.}\label{fig:error}
%\end{figure}
%%
%
\begin{figure}[tpb!]
\begin{minipage}[b]{.24\textwidth}
\centering
\includegraphics[width=4.3cm]{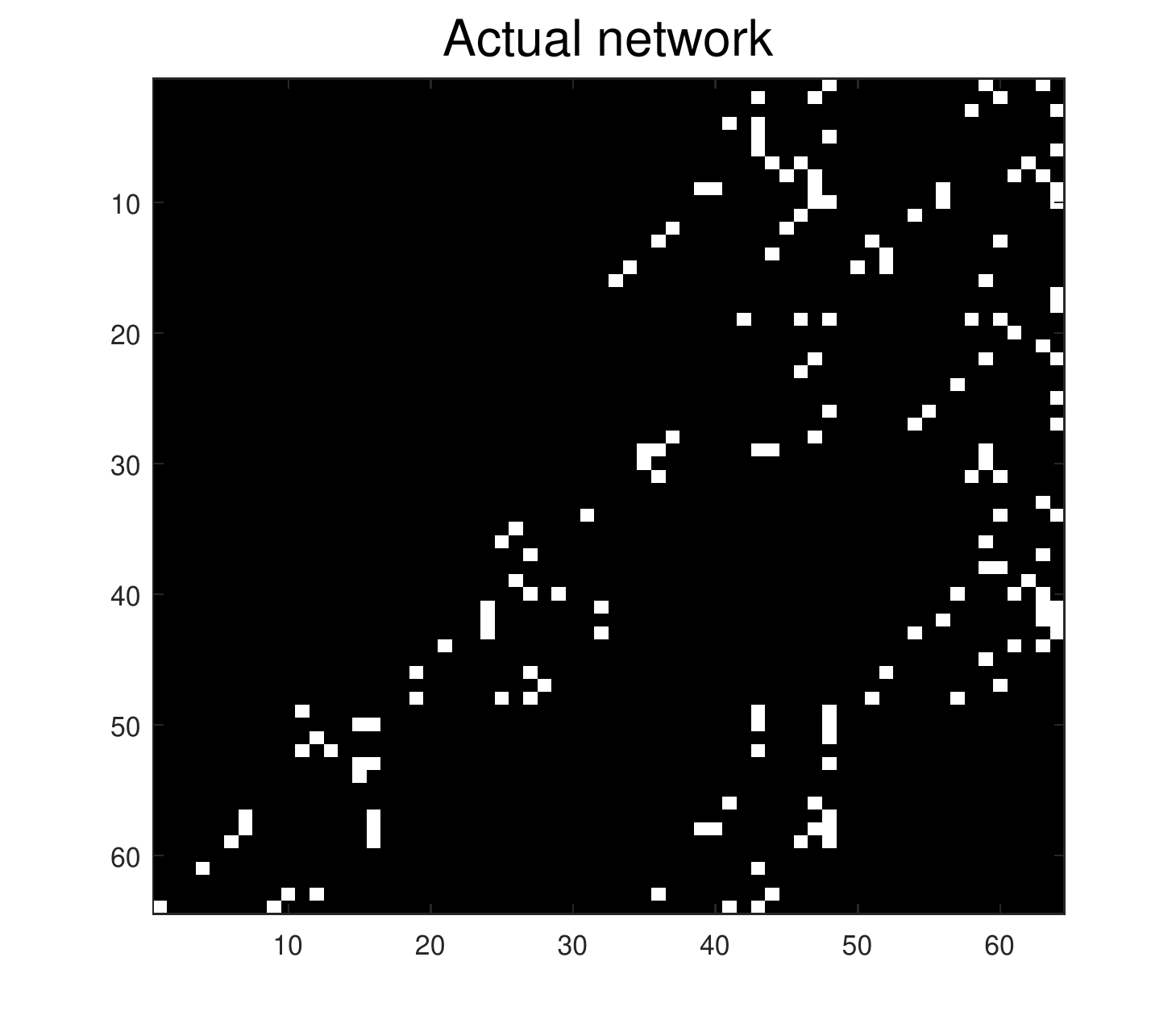}
\end{minipage}
\begin{minipage}[b]{.24\textwidth}
\centering
\includegraphics[width=4.3cm]{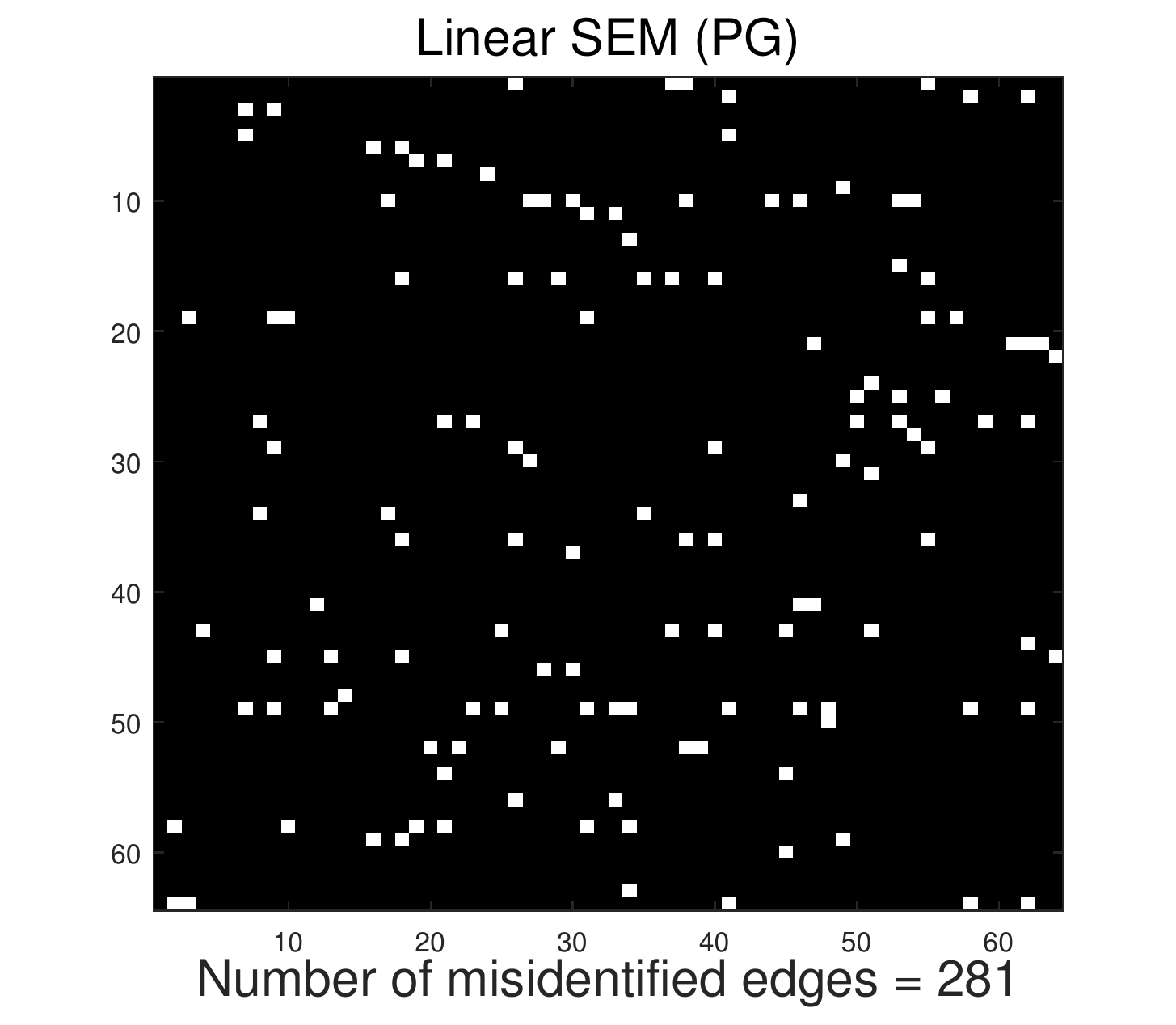}
\end{minipage}
\begin{minipage}[b]{.24\textwidth}
\centering
\includegraphics[width=4.3cm]{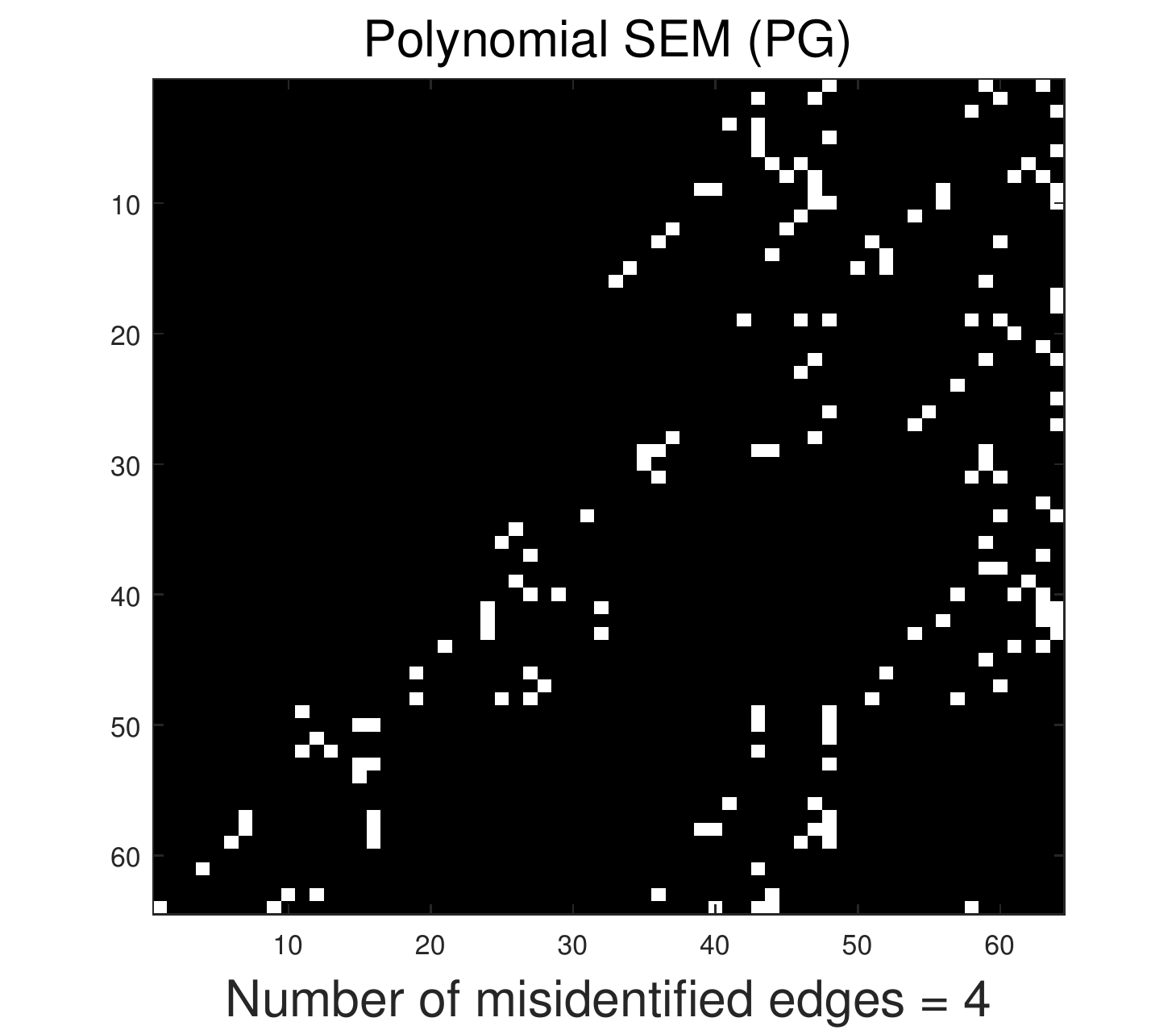}
\end{minipage}
\begin{minipage}[b]{.24\textwidth}
\centering
\includegraphics[width=4.3cm]{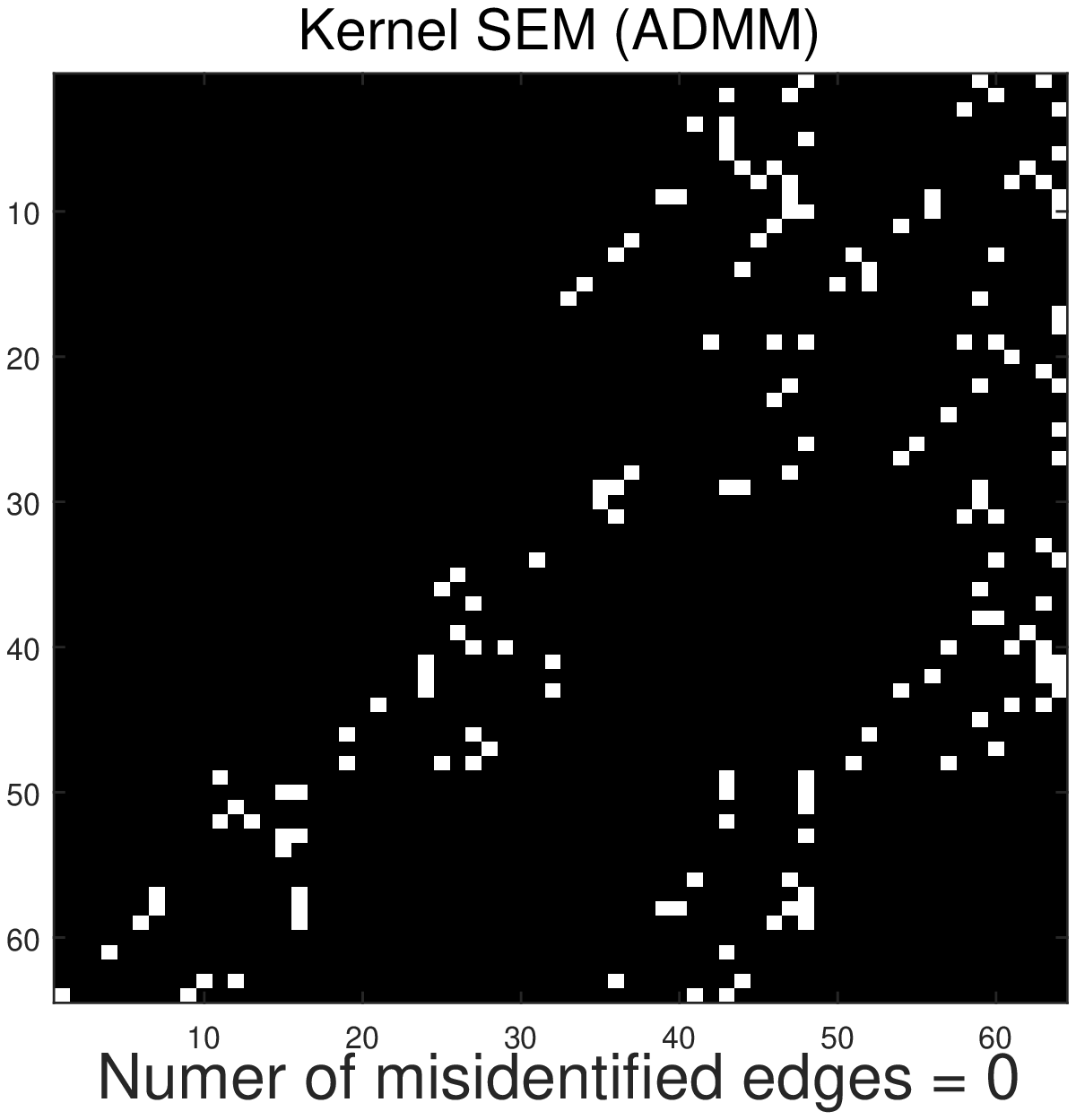}
\end{minipage}
 \caption{Plots of actual and inferred adjacency matrices resulting from adopting linear and nonlinear SEMs with $M=128$.} 
 \label{fig:net}
\end{figure}
\begin{figure*}[tpb!]
\hspace{-3.5mm}
\begin{minipage}[b]{.33\textwidth}
\centering
\includegraphics[width=6.7cm]{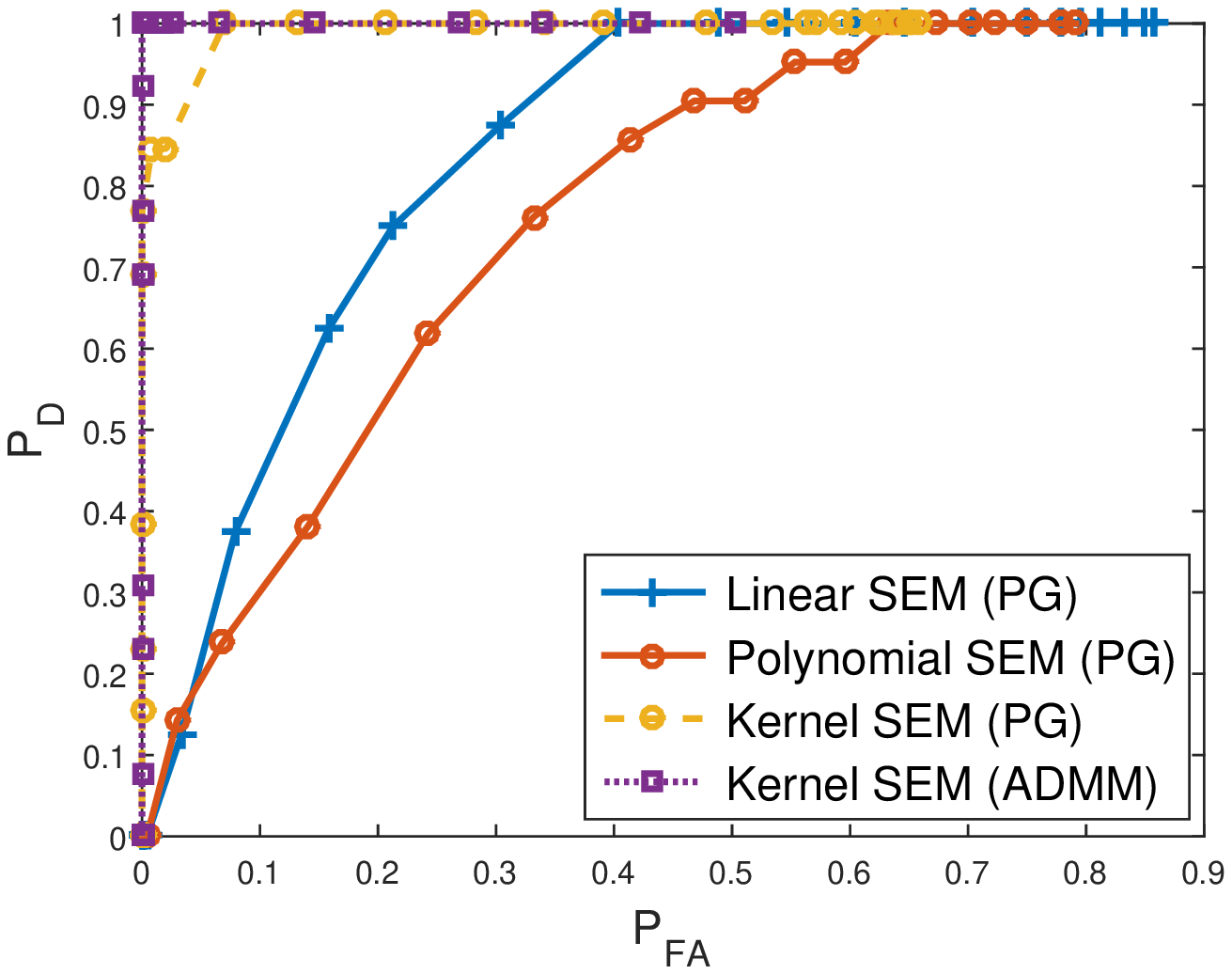}
\centerline{(a)}
\end{minipage}
\begin{minipage}[b]{.33\textwidth}
\centering
\includegraphics[width=6.7cm]{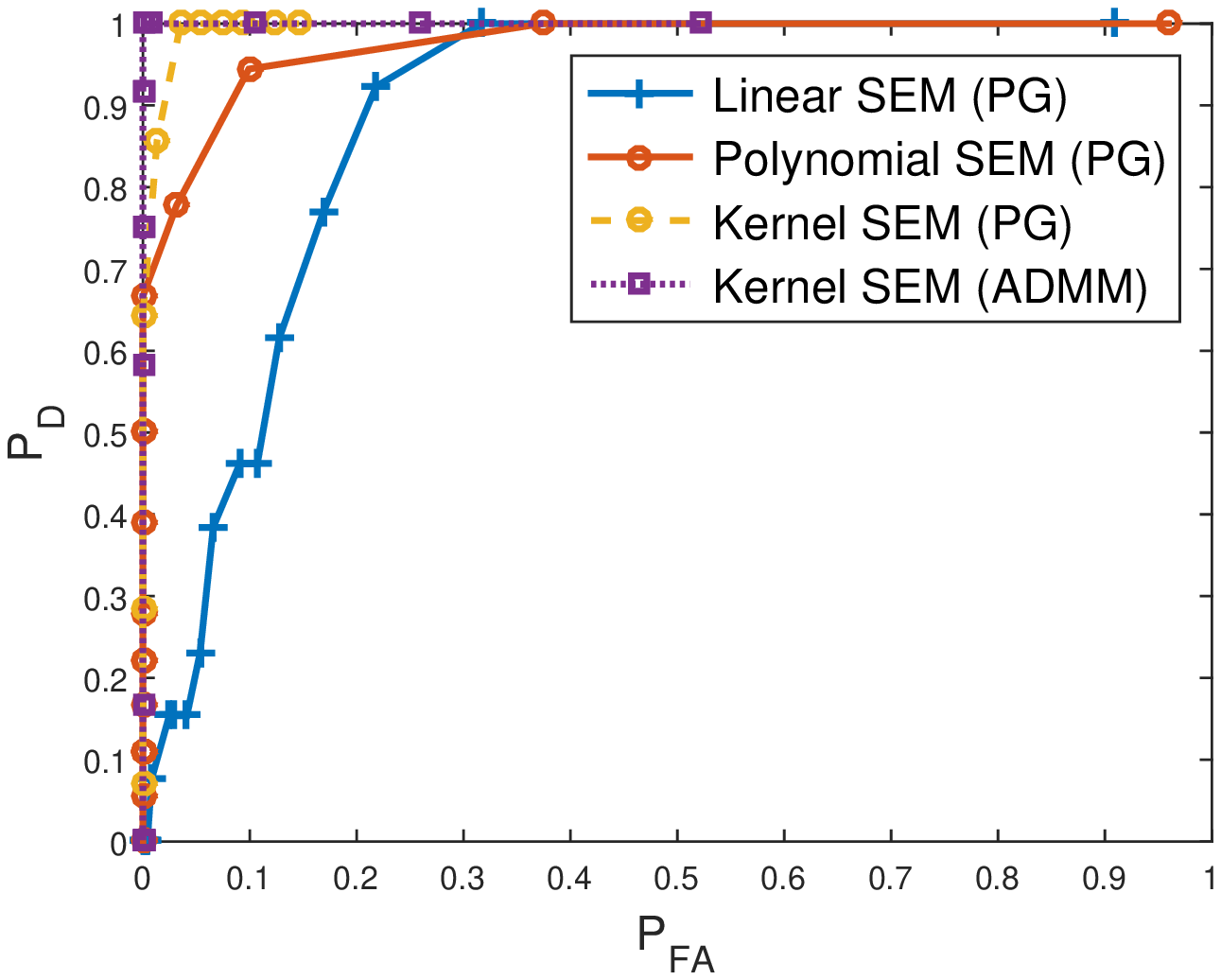}
\centerline{(b)}
\end{minipage}
%
%
%\begin{minipage}[b]{.24\textwidth}
%\centering
%\includegraphics[width=4.3cm]{roc01}
%\end{minipage}
%
%
\begin{minipage}[b]{.33\textwidth}
\centering
\includegraphics[width=6.7cm]{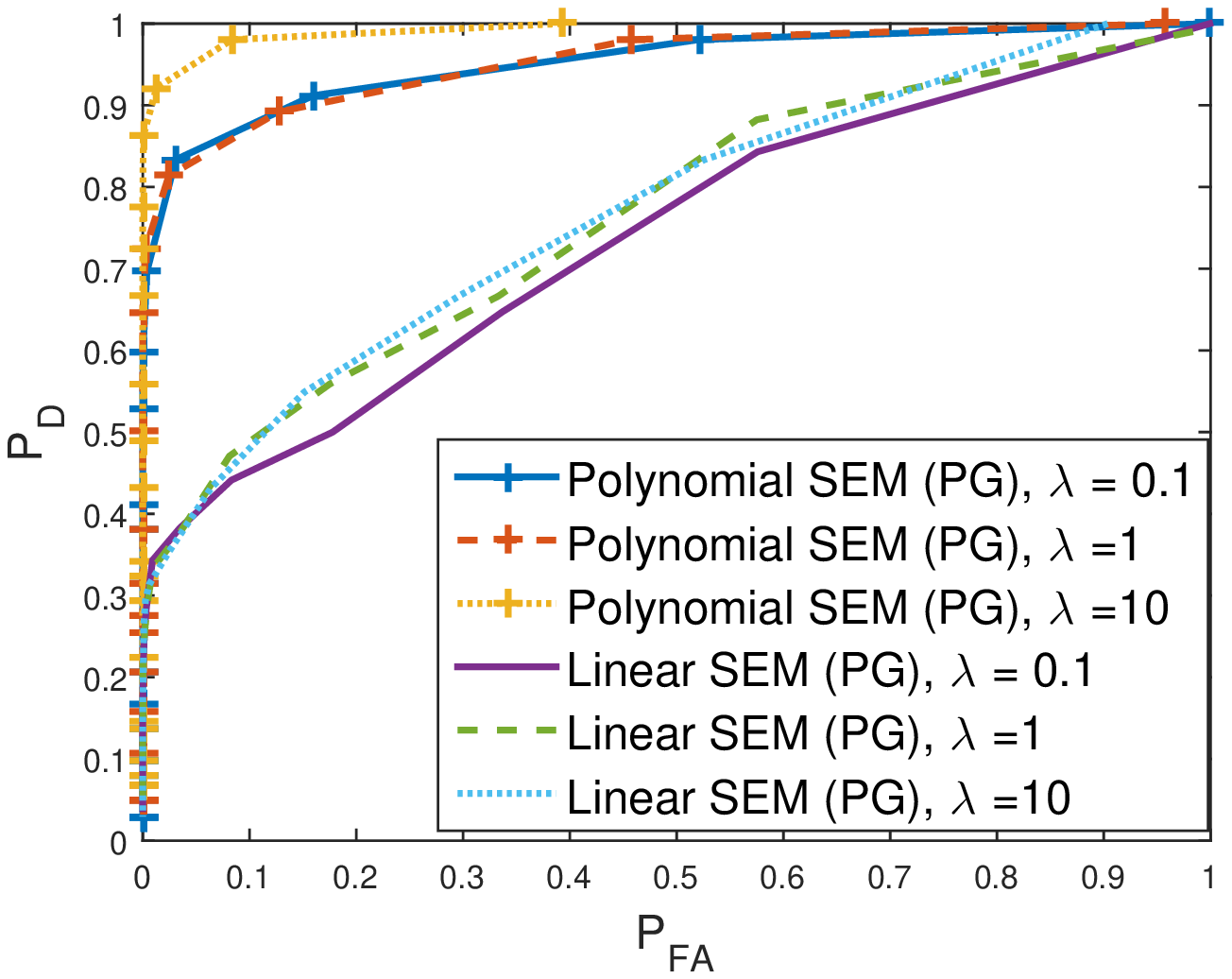}
\centerline{(c)}
\end{minipage}
\vspace{3mm}
 \caption{ROC curves generated under different modeling assumptions: a) nonlinear SEM based on a Gaussian kernel with $\sigma^2=1$; b) nonlinear SEM based on polynomial kernel of order $P=2$; and c) polynomial SEM of order $2$.} 
 \label{fig:roc}
\end{figure*}
\vspace{3mm}
%
%
%
%\begin{figure}[tpb!]
%	\centering
%	\includegraphics[scale=0.6]{roc_rbf}
%	\caption{ROC curve for dataset with Gaussian kernel of $\sigma^2=1$.}\label{fig:roc_rbf}
%\end{figure}

%\begin{figure}[tpb!]
%	\centering
%	\includegraphics[scale=0.6]{roc}
%	\caption{ROC curve for dataset with polynomial kernel $P=2$.}\label{fig:roc_poly}
%\end{figure}

%
%\begin{figure}[h]
%	\centering
%	\includegraphics[scale=0.6]{roc01}
%	\caption{ROC curve for dataset with $c_{ij}[2]\sim{\calN} (0, 0.01)$.}\label{fig:roc01}
%\end{figure}
%

%\begin{figure}[tpb!]
%	\centering
%	\includegraphics[scale=0.6]{}
%	\caption{ROC curve for dataset with $c_{ij}[2]\sim{\calN} (0,1)$.}\label{fig:roc1}
%\end{figure}
%
In order to assess edge detection performance, receiver operating characteristic (ROC) curves under different modeling assumptions are plotted in Figure~\ref{fig:roc}. With $P_D$ denoting the probability of detection, and $P_{FA}$ the probability of false alarms, each point on the ROC corresponds to a pair $(P_{FA}, P_D)$ for a prescribed threshold. Figure~\ref{fig:roc} (a) results from tests run on data generated by Gaussian kernels, while Figure~\ref{fig:roc} (b) corresponds to polynomial kernels of order $P=2$. Using the area under the curve (AUC) as the edge detection performance criterion, Figures~\ref{fig:roc} (a) and (b) clearly emphasize the benefits of accounting for nonlinearities. In both plots, kernel-based approaches result in the highest AUC values than approaches that resort to either polynomial or linear SEMs. Interestingly, adopting a polynomial SEM for data generated by Gaussian kernels leads to a lower AUC than the plain linear SEM in this case.

Figure~\ref{fig:roc} (c) plots ROC curves based on linear and polynomial SEMs, with simulated data actually generated using a polynomial SEM of order $2$. The curves are parameterized by the sparsity-control parameter $\lambda$, with more accurate topology identification resulting from $\lambda=10$. As expected, the linear SEM underperforms the polynomial SEM, due to the inherent model mismatch. 

\subsection{Real gene expression data}
Linear SEMs have recently been adopted for identification of gene regulatory topologies, with nodes representing individual genes, while directed edges encode causal regulatory relationships between gene pairs. The goal of this experiment was to test the novel nonlinear SEMs of the present paper, and assess whether it is possible to glean new insights about gene regulatory behavior. 

The experiment was based on real gene expression data resulting from RNA sequencing of cell samples from $69$ unrelated Nigerian individuals, under the International HapMap project~\cite{frazer2007second}. From the $929$ identified genes, expression levels and the genotypes of the \emph{expression quantitative trait loci (eQTLs)} of $39$ immune-related genes were selected and normalized; see~\cite{cai2010memoryless} and ~\cite{pickrell2010understanding} for detailed descriptions. Genotypes of eQTLs were adopted as exogenous inputs or perturbations $\bbX$, since the more typical gene-knockout experiments are generally impractical for human subjects. On the other hand, gene expression levels were treated as the endogenous variables $\bbY$. 

Inference of the underlying gene regulatory network topology was done by adopting both linear and the nonlinear SEM approaches developed in the present paper. For each algorithm, $\lambda$ was carefully selected by $5$-fold cross-validation. Figure~\ref{fig:gene} depicts network visualizations of the identified topologies, with the nodes annotated with their corresponding gene IDs. Figure~\ref{fig:gene} (a) depicts the resulting network based on a linear SEM, while Figures~\ref{fig:gene}(b)-(c) result from nonlinear SEMs based on polynomial kernels of orders $2$ and $3$, as well as a Gaussian kernel with $\sigma^2 = 1$. 

In all cases, the identified networks are very sparse, and the visualizations only include nodes that have at least a single incoming or outgoing edge. Interestingly, the novel nonlinear approaches unveil all edges identified by the linear SEMs, as well as a number of new edges. These newly discovered gene regulatory interactions could potentially be the subject of studies by geneticists, to investigate whether they lead to a better understanding of causal influences among immune-related genes across humans. Clearly, acknowledging the possibility that interactions among genes may be driven by nonlinear dynamics, our novel nonlinear modeling framework subsumes linear approaches, and facilitates discovery of causal patterns that may not be captured through linear SEMs.

\section{Concluding Summary}
\label{sec:conclusion}
This paper put forth a novel nonlinear structural equation modeling framework for inference of sparse directed network topologies, over which observable processes propagate. Postulating a general additive nonlinear model to capture dependencies between endogenous variables, a sparsity-promoting LS estimator was put forth to recover the unknown network topology. Since all dependencies on the unknown functions in the estimator are expressible as inner products, kernels were adopted as an encompassing nonlinear modeling framework. Efficient algorithms based on ADMM and PG iterations were developed to solve the ensuing optimization problem. It was also demonstrated that polynomial SEMs are naturally subsumed by the advocated framework, as a special case in which the nonlinear functions belong to the class of polynomials. Several experiments conducted on simulated data demonstrated the effectiveness of the developed algorithms in inference of sparse directed networks. Numerical tests on real gene expression data demonstrated that the advocated modeling approach is capable of recovering new causal links, that were not detected by conventional linear SEMs. 

This work opens up a number of interesting directions for future research, including: a) broadening the scope of the novel approach to dynamic network topologies; b) exploring more efficient inference algorithms e.g., distributed implementations that are well-motivated in large-scale networks, or online operation when measurements are acquired sequentially; and c) deriving identifiability results for the novel nonlinear model.
\begin{figure*}[t]
\begin{minipage}[b]{.49\textwidth}
\centering
\includegraphics[width=8.4cm]{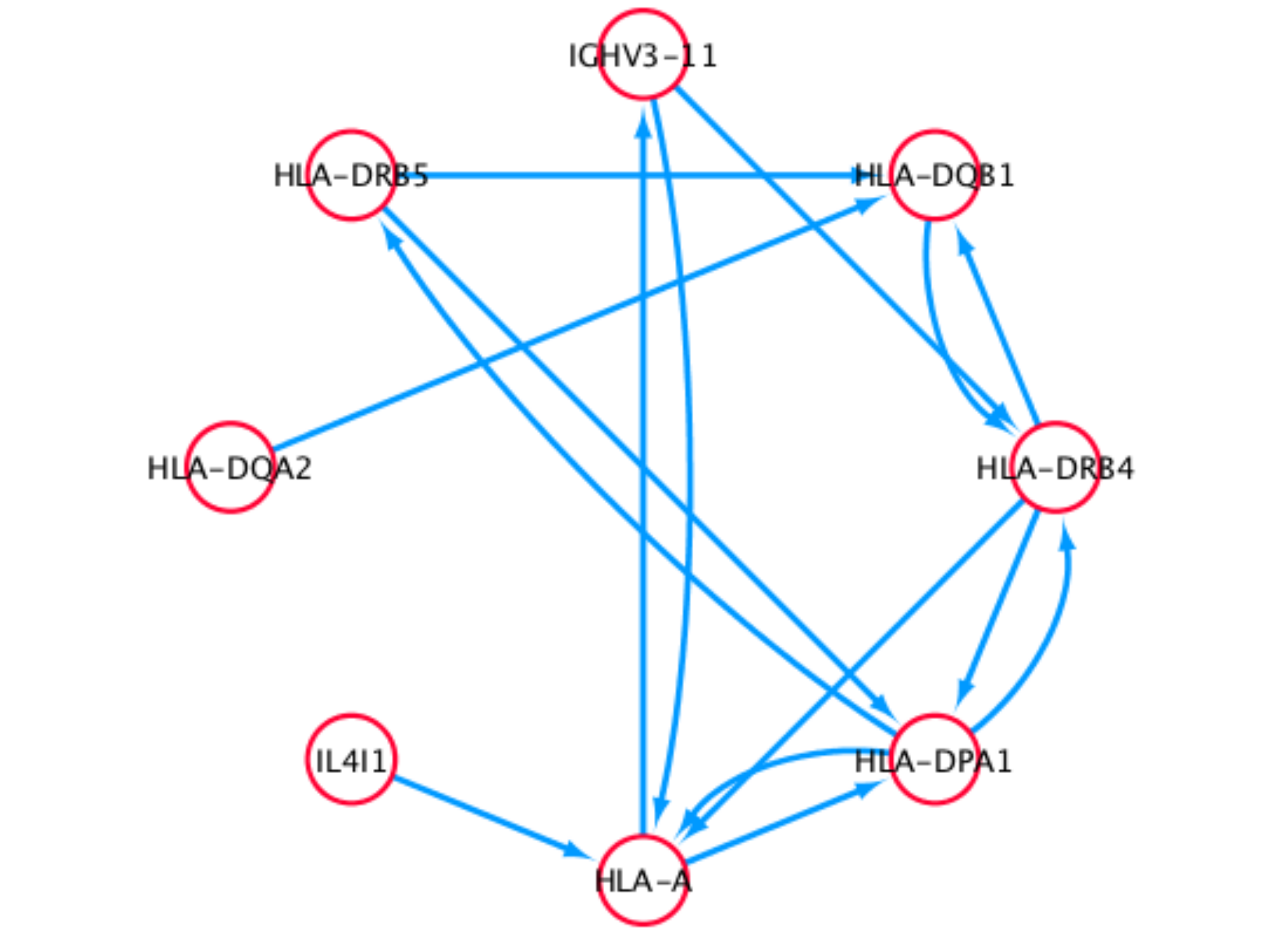}
\centerline{(a)}
\end{minipage}
\begin{minipage}[b]{.49\textwidth}
\centering
\includegraphics[width=8.4cm]{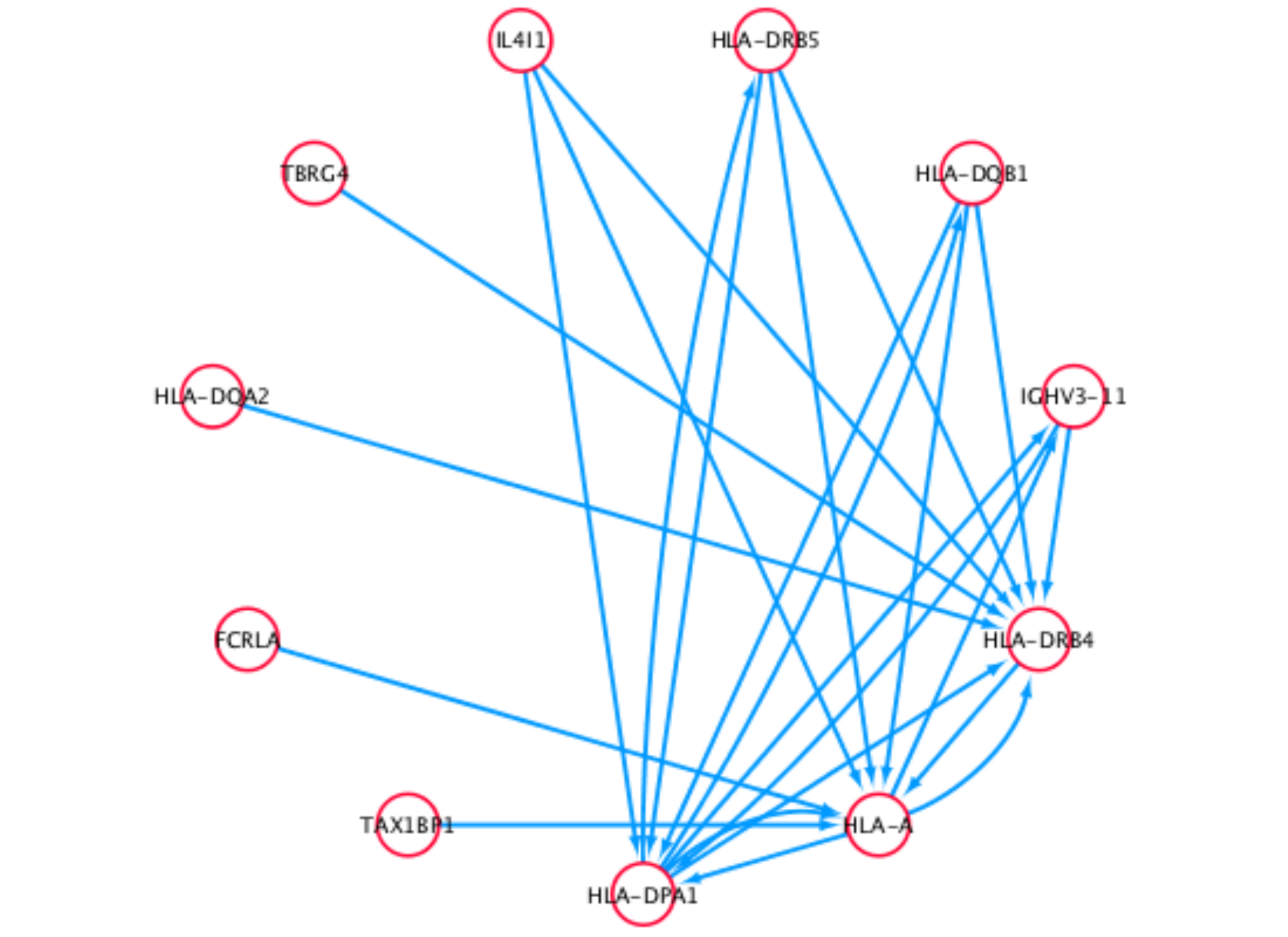}
\centerline{(b)}
\end{minipage}
\begin{minipage}[b]{.49\textwidth}
\centering
\includegraphics[width=8.4cm]{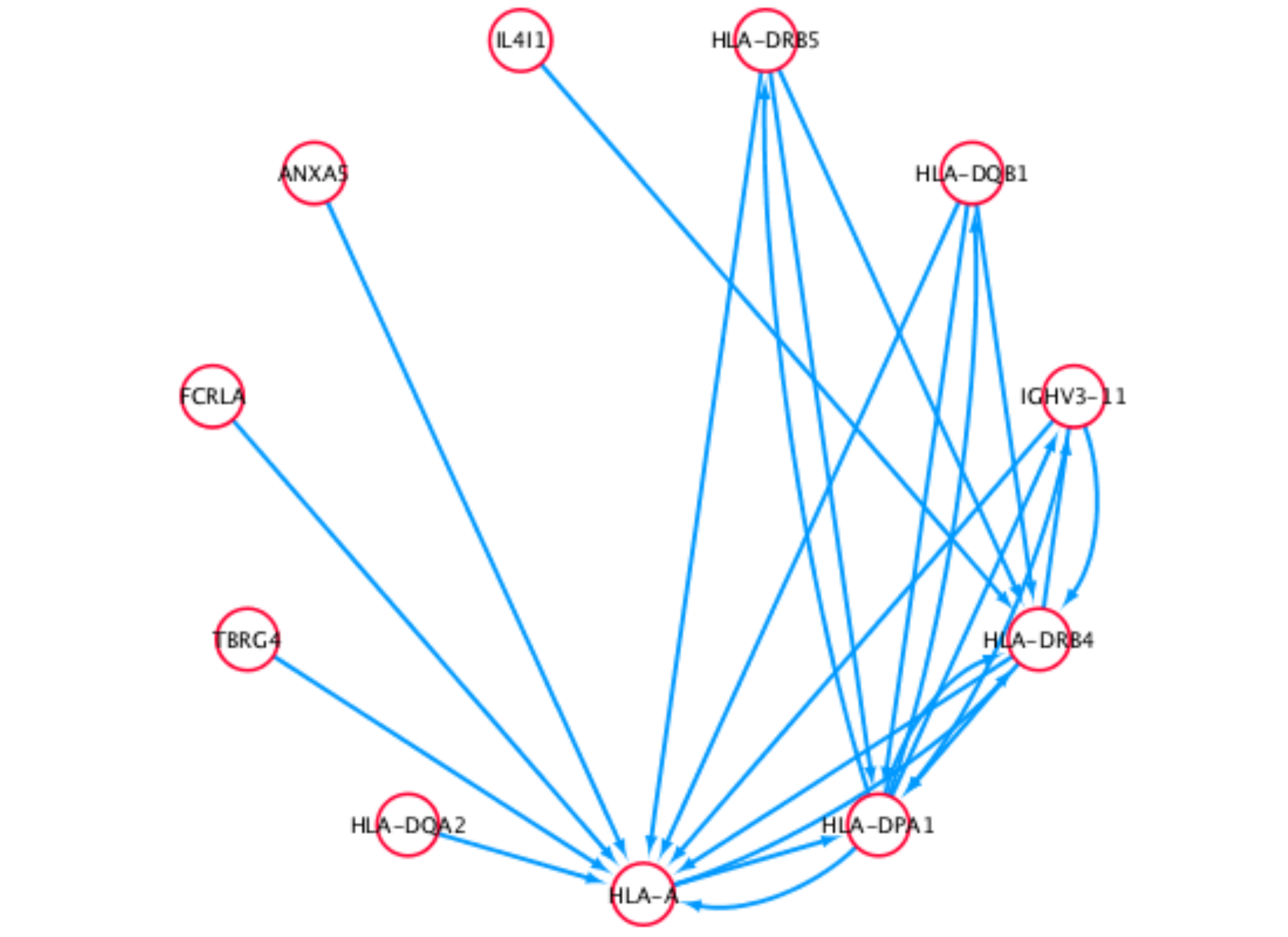}
\centerline{(c)}
\end{minipage}
\begin{minipage}[b]{.49\textwidth}
\centering
\includegraphics[width=8.4cm]{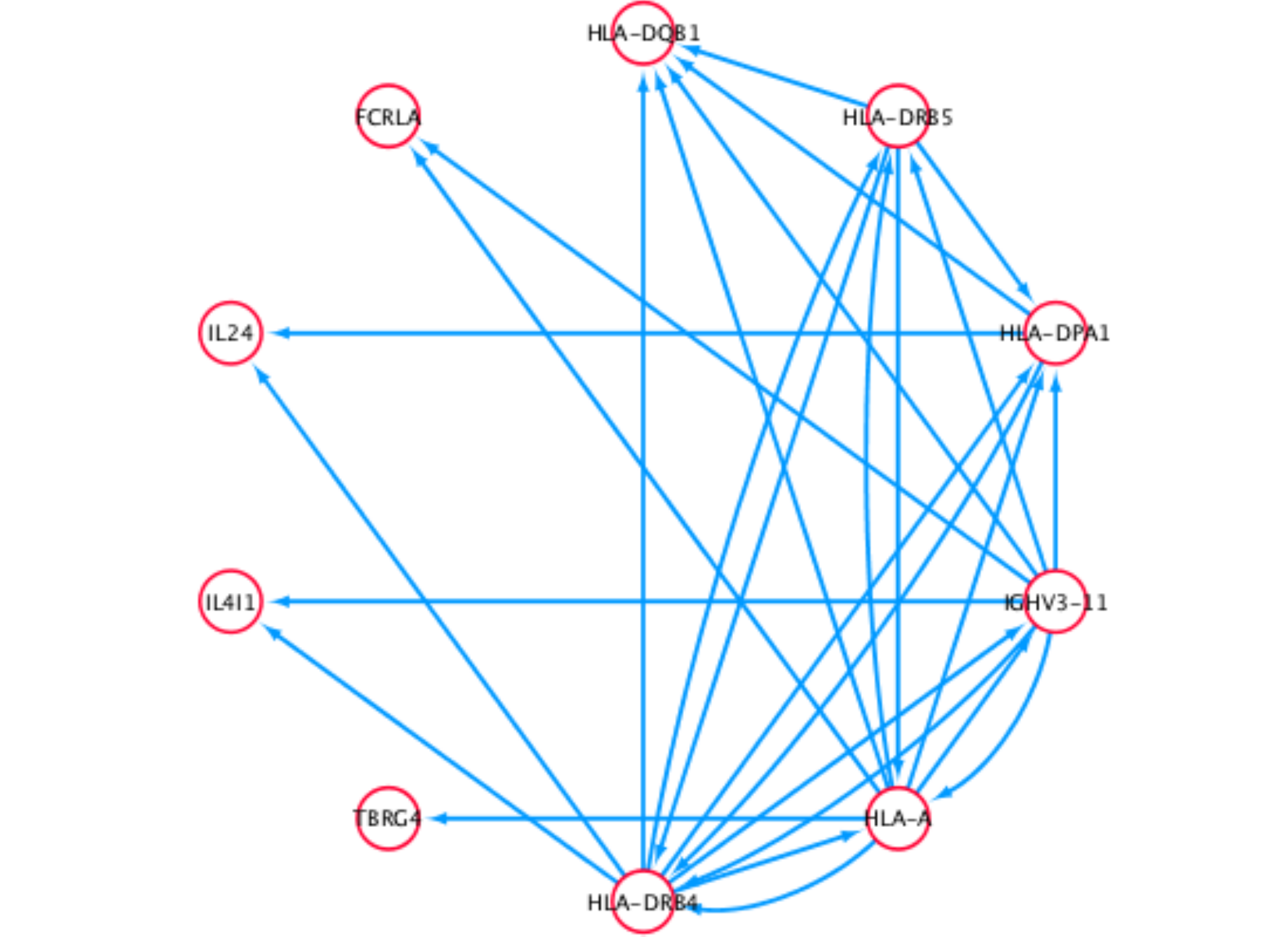}
\centerline{(d)}
\end{minipage}
 \caption{Network visualizations of inferred gene regulatory networks of $39$ immune-related human genes, based on gene expression data of $M=69$ individuals using: (a) a linear SEM; (b) a kernel-based SEM using polynomial kernels of order $2$; (c) a kernel-based SEM using polynomial kernels of order $2$; and (d) a kernel-based SEM using Gaussian kernels with $\sigma^2=1$. The nonlinear inference approaches are capable of unveiling a number of new links that were not discovered by linear SEMs.} 
 \label{fig:gene}
\end{figure*}
\appendices
\section{Proof of Proposition \ref{proposition1}}
\label{appendix:A}

First notice that~\eqref{eq:obj:matrix} is separable across columns of $\bbW$. The separable cost can be modified to incorporate the constraints, with each subproblem consequently becoming expressible as
\begin{multline}
\label{eq:app:vec}
	\{\{\hat{\bbw}_{ij}\}_{i\neq j}, \hat{b}_{jj}\}= \\ 
	\underset{{\{\bbw_{ij}\}, b_{jj}} }{\min} \;\;
	  (1/2) \big \| \bby_j-  \sum_{i\neq j}\bbPhi_i\bbw_{ij} 
	  -b_{jj}\bbx_j \big \|_2^2 
	+\lambda \sum_{i\neq j}\|\bbw_{ij}\|_2.
\end{multline}
Equivalently, by isolating $\bbw_{ij}$ for a specific $i$, \eqref{eq:app:vec} can now be written as
\vspace{1mm}
\begin{multline}
\label{eq:app:1}
	\min_{\{{\bbw}_{kj}\}_{k\neq i,j}, b_{jj}} \bigg[  \min_{\bbw_{ij}} \;\; (1/2) \big \| \bby_j-\sum_{k\neq i, j}\bbPhi_k\bbw_{kj} \\ 
	-b_{jj}\bbx_j-\bbPhi_i\bbw_{ij} \big \|_2^2
	+\lambda \|\bbw_{ij}\|_2 \bigg]+\lambda \sum_{k\neq i, j}\|\bbw_{kj}\|_2
\end{multline}
where the inner optimization problem is solved first, followed by the outer one. 
Defining terms that are constant w.r.t. $\bbw_{ij}$ in~\eqref{eq:app:1} as the vector
\vspace{1mm}
\begin{equation}
\label{eq:app:1b}
\bbr_{ij} := \bby_j - \sum_{k\neq i, j} \bbPhi_k\bbw_{kj} - b_{jj}\bbx_j
\end{equation}
and focusing only on the inner minimization over $\bbw_{ij}$, one obtains the following regularized LS problem
\vspace{2mm}
\begin{equation}
\label{eq:app:2}
	\min_{\bbw_{ij}}  \;\; (1/2) \|\bbr_{ij}-\bbPhi_i\bbw_{ij}\|_2^2+\lambda \|\bbw_{ij}\|_2.
\end{equation}
Recalling that  $\bbPhi_{i} := [\bbphi(y_{i1}),\ldots,\bbphi(y_{iM})]^\top 
$, the set of vectors 
\[
\mathcal{F} := \left\lbrace \bbphi(y_{i1}), \ldots, \bbphi(y_{iM}),\bbd_1,\ldots,\bbd_{P-M'} \right\rbrace
\]
constitutes a complete basis of $\mathbb{R}^P$, where $M':= \text{dim}(\text{span}\left(\bbphi(y_{i1}), \ldots, \bbphi(y_{iM})\right))$, and $\{\bbd_k\in \mathbb{R}^P\}_{k=1}^{P-M'}$ denotes nonzero vectors orthogonal to $\{\bbphi(y_{im})\}$, i.e.,
\vspace{1mm}
\begin{equation}
\label{eq:app:3}
\bbd_k^\top \bbphi(y_{im})=0 \; \forall k,m.
\end{equation}
This means that any vector in $\mathbb{R}^P$ can be represented as a linear combination of elements
from $\mathcal{F}$, and the optimal solution to~\eqref{eq:app:2} admits the following expansion
\vspace{1mm}
\begin{align}
\label{eq:app:4}
	\hat{\bbw}_{ij}=\sum_{m=1}^M \alpha_{ijm} \bbphi(y_{im})+\sum_{k=1}^{P-M'} \beta_{ijk}\bbd_k
\end{align}
without loss of generality. Note that $\{ \alpha_{ijm} \}$ and $\{ \beta_{ijk} \}$ in~\eqref{eq:app:4} denote 
basis coefficients. Adopting~\eqref{eq:app:4}, the LS fitting term in \eqref{eq:app:2} can be written as
\begin{align}
\label{eq:app:5}
	&\|\bbr_{ij}-\bbPhi_i\hat{\bbw}_{ij}\|_2^2\nonumber\\
	=&\left\|\bbr_{ij}-\bbPhi_i\left(\sum_{m=1}^M \alpha_{ijm} \bbphi(y_{im})+\sum_{k=1}^{P-M'} \beta_{ijk}\bbd_k\right)\right\|_2^2\nonumber\\
	=&\left\|\bbr_{ij}-\sum_{m=1}^M \alpha_{ijm}\bbPhi_i \bbphi(y_{im})\right\|_2^2
\end{align}
where the last equality is a direct consequence of~\eqref{eq:app:3}. Clearly, from~\eqref{eq:app:5}, the component of $\bbw_{ij}$ lying within the span of $\{\bbd_k\}$ has no influence on the LS fitting term in~\eqref{eq:app:4}. On the other hand, the second term in~\eqref{eq:app:2} can be expanded as [cf.~\ref{eq:app:4}]
\begin{align}
\label{eq:app:6}
	\|\hat{\bbw}_{ij}\|_2&=\left\|\sum_{m=1}^M \alpha_{ijm} \bbphi(y_{im})\right\|_2+\left\|\sum_{k=1}^{P-M'} \beta_{ijk}\bbd_k\right\|_2\nonumber\\
	&\geq \left\|\sum_{m=1}^M \alpha_{ijm} \bbphi(y_{im})\right\|_2
\end{align}
where the first equality holds due to~\eqref{eq:app:3}, while the second inequality follows from the non-negativity of norms. Also note that equality in~\eqref{eq:app:6} is attained when $\beta_{ijk}=0,~\forall k$. 

Combining~\eqref{eq:app:5} and~\eqref{eq:app:6}, one deduces that $\hat{w}_{ij}$ is the optimal solution to~\eqref{eq:app:2} if $\beta_{ijk}=0,~\forall k$ in~\eqref{eq:app:4}. This is based on the argument that any coefficient $\beta_{ijk} \neq 0$ will not change the value of the LS fit, yet it will certainly increase the penalty term, leading to a higher overall cost. It can now be concluded that any optimal solution to~\eqref{eq:app:2}, regardless of the value of $\bbr_{ij}$, admits the following expansion 
\vspace{1mm}
\begin{align}
\label{eq:app:7}
	\hat{\bbw}_{ij}=\sum_{m=1}^M \alpha_{ijm} \bbphi(y_{im})=\bbPhi_i^\top \bbalpha_{ij}.
\end{align}
It is also worth noting that~\eqref{eq:app:7} holds for any $i$ regardless of the outer optimization in~\eqref{eq:app:vec}. This concludes the proof of Proposition~\ref{proposition1}.

\bibliography{net,myabrv}
\bibliographystyle{IEEEtranS}
\end{document}